\documentclass{article}

\PassOptionsToPackage{numbers, compress}{natbib}
\usepackage{booktabs} 
\usepackage{xcolor}
\newcommand\ie{\textit{i.e.}}

\usepackage[preprint]{neurips_2026}
\usepackage{enumitem}
\setlist{leftmargin=15pt} 
\setlist[itemize]{noitemsep, topsep=0pt}

\usepackage{amsmath}
\usepackage{amssymb}
\usepackage{mathtools}
\usepackage{amsthm}
\usepackage{hyperref}
\usepackage[capitalize,noabbrev]{cleveref}
\usepackage{tabularx}

\usepackage{caption}    
\usepackage{subcaption} 
\usepackage{array}
\usepackage{bbm}
\usepackage{amsmath,amsfonts,amssymb}
\usepackage{graphicx}
\usepackage{hyperref}
\usepackage{enumitem}
\usepackage{geometry}
\usepackage{amsthm}
\usepackage{thmtools,thm-restate, multirow, multicol,xspace}
\usepackage{algorithm, algorithmic}
\usepackage{xcolor}

\newcommand{\bs}[1]{\boldsymbol{#1}}

\newtheorem{theorem}{Theorem}[section]
\newtheorem{proposition}[theorem]{Proposition}

\newtheorem{definition}[theorem]{Definition}
\newtheorem{assumption}[theorem]{Assumption}

\usepackage[most]{tcolorbox}
\usepackage{listings}
\usepackage{xcolor}

\definecolor{lightgraybg}{RGB}{248,248,248}
\definecolor{framegray}{RGB}{220,220,220}
\definecolor{darkblue}{RGB}{40,70,140}
\definecolor{darkgreen}{RGB}{20,110,60}

\lstdefinestyle{promptstyle}{
  basicstyle=\ttfamily\footnotesize,
  backgroundcolor=\color{lightgraybg},
  frame=none,
  breaklines=true,
  columns=fullflexible,
  keepspaces=true,
  showstringspaces=false,
  aboveskip=0pt,
  belowskip=0pt
}

\newtcblisting{promptbox}[2][]{
  colback=white,
  colframe=framegray,
  boxrule=0.5pt,
  arc=2pt,
  left=6pt,
  right=6pt,
  top=6pt,
  bottom=6pt,
  title={#2},
  fonttitle=\bfseries,
  coltitle=black,
  listing only,
  listing options={style=promptstyle},
  #1
}

\title{Remember with Confidence: Uncertainty Quantification for Spatio-temporal Memory with Probabilistic Guarantees}

\author{%
  Harry Zhang\thanks{Equal contribution.} \\
  MIT\\
  \texttt{harryz@mit.edu} \\
  \And
  Nicolas Gorlo\footnotemark[1] \\
  MIT\\
  \texttt{ngorlo@mit.edu} \\
  \And
  Luca Carlone \\
  MIT\\
  \texttt{lcarlone@mit.edu} \\
  }

\makeatletter
\def\@gobbletrailingpunct{%
  \@ifnextchar.{\@gobble\@gobbletrailingpunct}{%
    \@ifnextchar,{\@gobble\@gobbletrailingpunct}{%
      \@ifnextchar\space{\@gobble\@gobbletrailingpunct}{}%
    }%
  }%
}
\newcommand{\linkToPdf}[1]{\@gobbletrailingpunct}
\newcommand{\linkToPpt}[1]{\@gobbletrailingpunct}
\newcommand{\linkToCode}[1]{\@gobbletrailingpunct}
\newcommand{\linkToWeb}[1]{\@gobbletrailingpunct}
\newcommand{\linkToVideo}[1]{\@gobbletrailingpunct}
\newcommand{\linkToMedia}[1]{\@gobbletrailingpunct}
\newcommand{\award}[1]{\@gobbletrailingpunct}
\makeatother

\begin{document}
\maketitle

\begin{abstract}
Long-horizon robot operation requires spatio-temporal memory to record the environment state and recall it for downstream reasoning.
Scene graphs and retrieval-augmented systems ground VLM descriptions to persistent 3D entities with rich semantic descriptions.
However, VLM captions are noisy and viewpoint-inconsistent, and existing systems treat them as an oracle with no mechanism to detect unreliable stored descriptions.
We introduce object-level semantic uncertainty for multi-view VLM memory: a score that measures object-centric cross-view semantic scatter of captions and identifies semantically unresolved objects.
Then, we include our uncertainty scores in an advanced spatial-semantic memory system, that we dub UQ-DAAAM. UQ-DAAAM uses this score to actively refine uncertain objects under a fixed query budget by selecting high-quality views and fusing the resulting multi-view captions into a single object description.
We also derive probabilistic guarantees showing that higher-quality candidate views (as selected by our approach) are more likely to reduce uncertainty.
Our experiments show that uncertainty quantification can make embodied 4D memory systems more reliable and more effective.
In particular, on the OC-NaVQA benchmark, UQ-DAAAM achieves substantially larger uncertainty reduction and better spatio-temporal question answering performance than baselines. 
\end{abstract}    
\section{Introduction}

Safe long-horizon robot operation benefits from a persistent semantic memory of objects, agents, and regions encountered over time, which can be queried later for downstream reasoning and decision-making.
Recent work has shown that 3D scene graphs~\citep{Gorlo26cvpr-DAAAM,Hughes24ijrr-hydraFoundations,Gu24icra-conceptgraphs,Werby24rss-hovsg,Koch24cvpr-Open3DSG} can serve this role by grounding rich natural-language descriptions to persistent 3D entities observed by a robot.
A parallel line of work~\citep{Anwar25icra-remembr,Xie24arxiv-EmbodiedRAG,Yang25cvpr-3dmem} skips per-object 3D grounding and instead stores per-frame observations or captions in a retrieval database.
Storing articulate captions has the advantage that the memory enabling the robot to reason about its spatio-temporal environment is directly interpretable by a human operator. 
In practice, semantic annotations of these systems come from vision-language models (VLMs), which produce open-vocabulary descriptions from camera images and possibly object masks. 
In the robot perception loop, VLMs convert raw visual observations into descriptions retrievable at query time that support downstream
applications such as spatio-temporal question answering~\citep{Gorlo26cvpr-DAAAM,Anwar25icra-remembr} and task grounding~\cite{Zhang24arxiv-taskOrientedGrounding,Maggio24ral-clio,Gorlo26cvpr-DAAAM}.

However, VLM outputs are often noisy and semantically inconsistent across viewpoints~\citep{Bai24arxiv-hallucination, Chen24arxiv-inside}.
An object observed far away, from an occluded angle, or under difficult lighting conditions
can result in a caption that confidently states incorrect attributes of the object.
Existing memory systems treat VLM annotations as an oracle~\citep{Gu24icra-conceptgraphs, Anwar25icra-remembr, Yang25cvpr-3dmem},
with no mechanism to detect if a stored description is unreliable.
For instance, in a memory system like DAAAM~\citep{Gorlo26cvpr-DAAAM}, which assigns each object
a small fixed set of representative views and stores their captions in its 4D scene graph representation:
if those views happen to be poor, the resulting memory entry is silently incorrect.

We address this gap with \textbf{UQ-DAAAM}, an uncertainty-aware extension of DAAAM that
gives each object in DAAAM's 4D scene graph an explicit semantic uncertainty score,
and uses it for targeted refinement under a fixed budget of additional VLM queries.
Our uncertainty metric measures \emph{cross-view semantic scatter}, the log-volume spanned by the per-view caption 
embeddings of an object, which represents how much the VLM's descriptions disagree across views, following
the volumetric semantic measurement framework of~\citet{Lau25icmlws-uqMllm} and~\citet{Zhang26arxiv-fuse}.
This score identifies objects which descriptions are semantically unresolved and cannot yet be
trusted for downstream reasoning. UQ-DAAAM then performs \emph{active refinement}:
it allocates the extra query budget preferentially to uncertain objects,
selects additional views using DAAAM's view-quality heuristic \cite{Gorlo26cvpr-DAAAM}, which scores each (frame, object) pair 
by visibility and pixel coverage, and fuses the resulting
multi-view descriptions into a consolidated object description with reduced uncertainty.

We provide a probabilistic justification for the view selection policy and show that, under the assumption of 
first-order stochastic dominance, views with higher quality scores are more likely to reduce object uncertainty, 
giving a principled basis for using the existing quality heuristic to guide refinement.
We also derive a closed-form expression for the one-step uncertainty change when adding a new view,
which enables efficient online computation without re-solving DAAAM's frame-assignment step.

We evaluate UQ-DAAAM on the OC-NaVQA benchmark~\citep{Gorlo26cvpr-DAAAM} for large-scale spatio-temporal
question answering. UQ-DAAAM substantially outperforms other baselines on
uncertainty reduction rate and downstream task performance,
while keeping the underlying scene graph infrastructure and VLM unchanged.
In summary, we contribute (i) an object-level uncertainty score over multi-view captions, 
(ii) a budgeted active-refinement and fusion strategy, 
(iii) a probabilistic guarantee linking DAAAM's view-quality heuristic to one-step uncertainty reduction with a closed-form update, and 
(iv) an integrated system, UQ-DAAAM, which achieves state-of-the-art results on OC-NaVQA.
\section{Related Work}

\paragraph{Spatio-temporal robot memory.}
3D scene graphs~\citep{Armeni19iccv-3DsceneGraphs, Rosinol20rss-dynamicSceneGraphs, Hughes22rss-hydra, Hughes24ijrr-hydraFoundations, Gorlo26cvpr-DAAAM} and retrieval-augmented memory systems~\citep{Anwar25icra-remembr, Xie24arxiv-EmbodiedRAG, Yang25cvpr-3dmem} provide structured representations for long-horizon robot operation, with open-vocabulary semantics attached via VLM features~\citep{Jatavallabhula23rss-ConceptFusion, Werby24rss-hovsg, Koch24cvpr-Open3DSG} or per-object VLM captions~\citep{Gu24icra-conceptgraphs}.
DAAAM~\citep{Gorlo26cvpr-DAAAM} combines both, by annotating a 4D scene-graph with localized VLM captions from a sparse set of representative view.
Across all these systems, VLM outputs are treated as oracle annotations~\citep{Gu24icra-conceptgraphs, Anwar25icra-remembr, Saxena25corl-GraphEQA, Ginting25corl-mindpalace, Honerkamp24ral-MoMa}; the few exceptions apply confidence only at the task-output level~\citep{Ren23corl-knowno, Ren24rss-exploreEQA}, not per-node memory.
We instead quantify and reduce semantic uncertainty at the object level, enabling the system to detect and resolve unreliable memory entries before they propagate to downstream reasoning.

\paragraph{Uncertainty quantification.} 
Classical uncertainty quantification (UQ) methods such as MC Dropout~\citep{Gal16icml-mcdropout}, Deep Ensembles~\citep{Lakshminarayanan17neurips-deepEnsembles}, and conformal prediction~\citep{Angelopoulos21arxiv-gentle, Barber23aos-conformal} address epistemic or aleatoric uncertainty but do not directly apply to multimodal large language models (MLLMs), where token-level confidence is poorly aligned with semantic correctness~\citep{Chou25aclf-mmR3, Lau25icmlws-uqMllm, Chen24arxiv-inside}.
Recent MLLM uncertainty work falls into three groups: input-level approaches that model uncertainty in multimodal embeddings~\citep{Tran22arxiv-plex, Upadhyay23iccv-probvlm, Venkataramanan25arxiv-probEmb}; output-level approaches that estimate uncertainty from response inconsistency via entropy, self-consistency, or embedding-space dispersion~\citep{Nikitin24neurips-kernelEntropy, Farquhar24nature-semanticEntropy, Manakul23arxiv-selfCheckGPT, Lau25icmlws-uqMllm, Chen24arxiv-inside}; and methods based on conformal prediction or external verification~\citep{Quach23arxiv-conformalLM, Su24arxiv-apiEnough, Khan24cvpr-consistency}.
Orthogonally, hallucination mitigation methods improve training data or model architectures~\citep{Liu23arxiv-aligningRobust, Yu24cvpr-halluciDoctor, Tong24cvpr-eyesWideShut, Zhai23arxiv-halleSwitch}, but residual uncertainty remains pervasive~\citep{Bai24arxiv-hallucination}.
Our uncertainty is represented at the object level and captures semantic disagreement across views, following the theoretical framework of UMPIRE~\cite{Lau25icmlws-uqMllm} and FUSE~\cite{Zhang26arxiv-fuse}.
\section{Problem Formulation}
\label{sec:problem}

We study \emph{semantic memory construction} for embodied agents.
Given an RGB-D stream $\mathcal{F}=\{f_1,\dots,f_n\}$, an object-centric perception system~\cite{Gorlo26cvpr-DAAAM} yields object observations (or \emph{fragments}) $\mathcal{O}=\{o_1,\dots,o_m\}$ tracked across frames.
The system constructs a structured semantic memory grounded in a geometric reconstruction of the spatio-temporal environment by associating each $o_j$ with natural-language descriptions from a selected view set $\mathcal{V}_j$;
since descriptions can vary across views, the initial memory may leave some objects semantically ambiguous.
We define an object-level uncertainty score $u_j = \mathcal{U}(o_j, \mathcal{V}_j)$ to quantify this ambiguity. 
Under a fixed additional query budget, \emph{active refinement} then acquires further views for uncertain objects, allocating queries to the fragments and viewpoints most likely to resolve remaining ambiguity for downstream reasoning.
\section{Method}
\begin{figure}
\centering
\includegraphics[width=\textwidth]{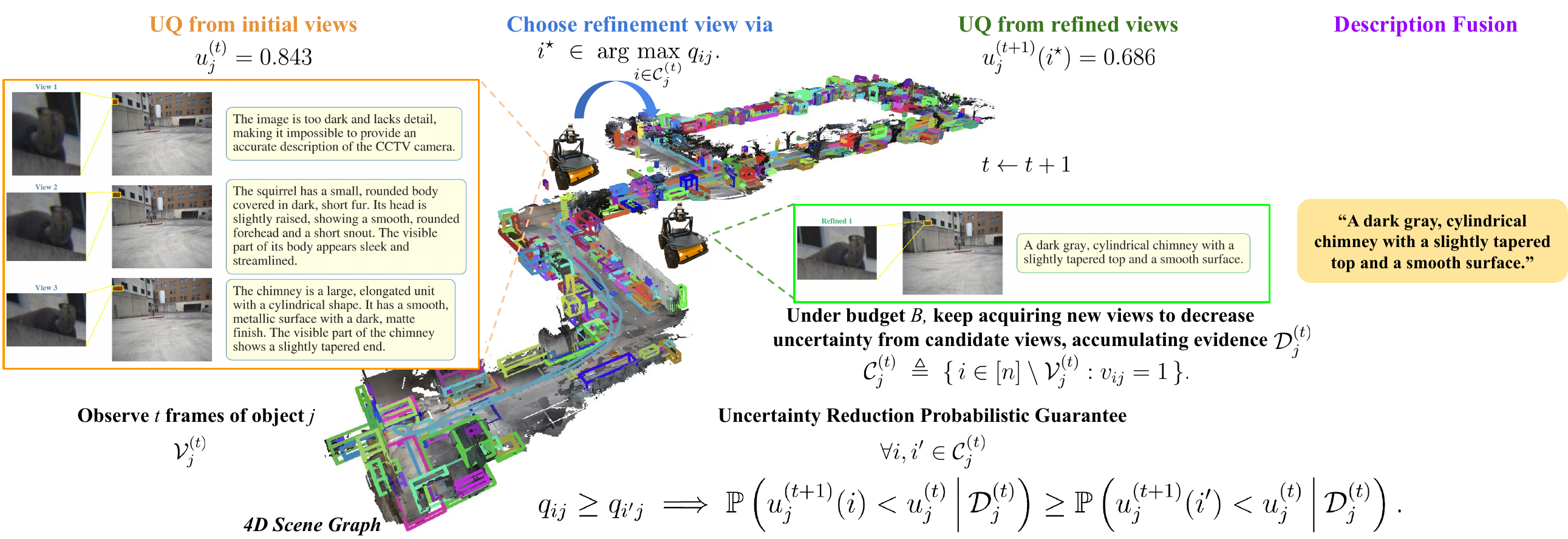}
\caption{Overview of UQ-DAAAM. Starting from DAAAM's 4D scene graph, we first compute an object-level 
uncertainty score from the captions associated with the initial selected views. If an object is uncertain, 
additional refinement views are chosen from the feasible set using the view-quality heuristic $q_{ij}$, supported by our probabilistic guarantee. 
Refinement continues under a fixed budget, reducing the uncertainty. Captions from all views are fused into a single description.}
\end{figure}

\paragraph{Background.}
We build upon DAAAM~\citep{Gorlo26cvpr-DAAAM}, which constructs 4D scene graphs in real time from an RGB-D stream.
DAAAM tracks object fragments $\{o_j\}_{j=1}^m$ across a rolling window of candidate frames $\{f_i\}_{i=1}^n$, 
maintaining a visibility matrix $v_{ij}\in\{0,1\}$ and a heuristic view-quality score $q_{ij}\in[0,1]$ per (frame, fragment) pair.
In DAAAM \cite{Gorlo26cvpr-DAAAM}, $q_{ij}$ is large when object $j$ is centered and large in frame $i$, and vice versa.
A binary linear program (BLP) assigns each fragment to \emph{a single} selected frame via a binary decision variable $y_{ij}$, maximizing $\sum_{i,j} q_{ij} y_{ij}$ subject to visibility and frame-budget constraints.
The selected (frame, fragment) pairs are captioned by the localized captioning model, DAM~\citep{Lian25iccv-DAM}. A mapping backend builds a scene graph from the RGB-D sensor data and the annotated fragments. Each object forms a node in the scene graph. The captions then form a per-node description history.

\paragraph{From single-view to multi-view assignment.}
For object-level uncertainty quantification, we need a measure of view-dependent disagreement. A single caption per fragment provides no signal in that sense. We therefore relax the BLP so that each fragment is covered by multiple views rather than exactly one. Letting $y_{ij}\in\{0,1\}$ indicate whether frame $f_i$ is one of the representative views for $o_j$, we replace the single-assignment constraint with
$
    \sum_{i=1}^{n} y_{ij} = r \;\; \forall j\in[m],
$
where $r$ is a fixed per-object frame budget (we use $r=3$), while retaining $y_{ij}\le v_{ij}$ (only select view $f_i$ if $o_j$ is visible) and $y_{ij}\le x_i$ (only consider views under budget). The BLP objective function $\sum_{i,j} q_{ij}y_{ij}$ is unchanged, optimizing for high-quality views. For each object $o_j$, we run DAM on every view in $\mathcal{V}_j=\{f_i: y_{ij}=1\}$, producing the multi-view caption set $\{c_{ij}\}_{i\in\mathcal{V}_j}$ used by the rest of the method.




\subsection{Uncertainty Quantification}
\label{sec:uq}

To quantify object-centric semantic uncertainty, we adapt the idea of volumetric semantic measurement \cite{Zhang26arxiv-fuse,Lau25icmlws-uqMllm} to the multi-view robot memory construction setting of DAAAM. 
In our setting, each object is observed from multiple selected views, and each view yields a natural-language description produced by a VLM. 
Uncertainty should increase both when these view-conditioned descriptions are semantically inconsistent and when individual descriptions are internally less coherent. 
Our formulation therefore combines two signals: \emph{cross-view semantic scatter} and \emph{per-view incoherence}.

\paragraph{Multi-view semantic evidence for an object.}

Consider an object fragment $o_j$. Suppose DAAAM selects a set of representative views $\mathcal{V}_j = \{f_{i_1}, \dots, f_{i_r}\}$
for this object, where $r$ is the number of selected views. For each selected view $f_{i}\in \mathcal{V}_j$, 
we query the VLM on the localized object crop and obtain a caption $c_{ij}$.
We then embed each caption into a semantic vector space using a text encoder $\phi(\cdot)$ and normalize the result:
$
\bs r_{ij}
\;\triangleq\;
\frac{\phi(c_{ij})}{\|\phi(c_{ij})\|_2}
\;\in\;
\mathbb{R}^d.
$
Stacking the normalized caption embeddings column-wise gives the object-level \textit{semantic evidence} matrix $
\bs R_j
\;\triangleq\;
[\bs r_{1j},\dots,\bs r_{rj}]
\;\in\;
\mathbb{R}^{d\times r}.
$
This matrix captures the semantic content of the object as perceived from the selected views.

\paragraph{Incoherence weighting.}
Following \cite{Zhang26arxiv-fuse,Lau25icmlws-uqMllm}, semantic diversity alone is insufficient for measuring uncertainty, 
since a set of captions could be similar but each incoherent.
We therefore assign each caption $c_{ij}$ an incoherence weight $\gamma_{ij} \triangleq \exp(\alpha(1-p_{ij}))$,
where $p_{ij}\in[0,1]$ is the average per-token probability and $\alpha\ge 0$ is a scaling hyperparameter.
A more coherent caption yields smaller $\gamma_{ij}$; a less coherent one inflates it.
The weights are collected in $\bs\Gamma_j \triangleq \mathrm{diag}(\gamma_{1j},\dots,\gamma_{rj})\in\mathbb{R}^{r\times r}$.

\paragraph{Uncertainty measure via semantic scatter.}

To combine cross-view semantics and per-view incoherence, we construct an incoherence-adjusted semantic evidence matrix
$
\bs Z_j
\;\triangleq\;
\bs R_j \bs \Gamma_j
\;\in\;
\mathbb{R}^{d\times r}.
$
Equivalently, the $\ell$-th column of $\bs Z_j$ is the weighted embedding
$
\bs z_{ij} = \gamma_{ij} \bs r_{ij}.
$
\begin{definition}[Semantic scatter and uncertainty score]
For an object fragment $o_j$, given incoherence-adjusted semantic evidence $\bs Z_j$, we define the semantic scatter matrix as
$
\bs S_j
\;\triangleq\;
\bs Z_j^\top \bs Z_j + \lambda \bs I_r
\;=\;
\bs \Gamma_j \bs R_j^\top \bs R_j \bs \Gamma_j + \lambda \bs I_r,
$
where $\lambda>0$ is a small regularization constant.
Our object-level uncertainty score is then defined as the log-determinant of 
the semantic scatter matrix
\begin{equation}
 u_j
\;\triangleq\;
\log\det \bs S_j.
\end{equation}
\end{definition}
The determinant of $\bs Z_j^\top \bs Z_j$ measures the squared volume spanned by $\bs Z_j$'s column vectors. 
$\det(\bs S_j)$, the semantic volume generated by the weighted multi-view caption embeddings, becomes large when (i) the captions 
from different views point in different semantic directions (increasing cross-view dispersion), or (ii) one or more captions are highly incoherent, inflating their weighted contribution through $\bs \Gamma_j$.
Consequently, our uncertainty score $u_j$ increases when the object's interpretation is unstable across views or unreliable within views.\footnote{We provide a more extensive rationale
behind object-level uncertainty in \cref{app:objectlevel}}.

\subsection{Active View Selection}
Assume a calibrated threshold value $\tau$: if $u_j$ exceeds $\tau$, then it implies that the agent is uncertain about 
the semantic interpretation of $o_j$. We let the agent actively acquire additional views for the same object fragment in an online manner: 
it acquires further views until the uncertainty falls below $\tau$ or a budget runs out. In the following discussion, we assume
that the agent has already selected $t$ views for object $o_j$ and is considering which additional view to acquire next. 
We denote $u_j^{(t)}$ as the uncertainty score computed from the current $t$ views. 
If $u_j^{(t)}>\tau$, then the agent is uncertain about $o_j$ based on the current evidence, and we wish to assign additional views 
to lower $u_j^{(t+1)}$.

\paragraph{Active view selection as BOED.} 
Let $\theta_j$ denote the latent semantic state of object $o_j$, and let
$\mathcal{V}_j^{(t)}\subseteq[n]$ denote the set of view indices selected for $o_j$
after $t$ acquisitions, so that
$\bs Z_j^{(t)}\in\mathbb{R}^{d\times t}$ stacks the weighted embeddings
$\{\bs z_{ij}\}_{i\in \mathcal{V}_j^{(t)}}$ and
$\bs S_j^{(t)} \triangleq (\bs Z_j^{(t)})^\top \bs Z_j^{(t)} + \lambda \bs I_t$.
We write $\mathcal{D}_j^{(t)}$ for the evidence accumulated after these $t$ views,
and define the feasible candidate views as the set of views where
$o_j$ is visible in the rolling window but have not yet been selected:
\begin{equation}
    \label{eq:cj-set}
\mathcal{C}_j^{(t)}
\;\triangleq\;
\{\, i\in [n]\setminus \mathcal{V}_j^{(t)} : v_{ij}=1 \,\}.
\end{equation}
In the language of Bayesian Optimal Experimental Design (BOED) \cite{Chaloner95ss-bayesianDOE}, each candidate view $f_i$ is an \emph{experiment}, 
$\theta_j$ is the \emph{latent parameter of interest}, 
and the future VLM output from view $i$ is the \emph{experimental outcome}. 
The exact BOED \emph{utility} of view $i$ is the expected reduction of uncertainty about $\theta_j$ that would 
result from acquiring view $i$ and observing its caption embedding. 
\begin{definition}[Exact one-step BOED utility and acquisition rule]
The one-step utility of candidate view $i$ is defined as the expected uncertainty reduction
$
\Delta_j(i \mid \mathcal{D}_j^{(t)})
\;\triangleq\;
\mathbb{E}\left[
u_j^{(t)} - u_j^{(t+1)}(i)
\,\middle|\,
\mathcal{D}_j^{(t)}
\right],
$
where $u_j^{(t+1)}(i)$ is the uncertainty score after acquiring view $i$.
The next refinement view should therefore be chosen by
\begin{equation}
\label{eq:argmax}
i_j^\star
\;\in\;
\arg\max_{i\in \mathcal{C}_j^{(t)}} \Delta_j(i \mid \mathcal{D}_j^{(t)}).
\end{equation}
\end{definition}

\paragraph{One-step update via augmented scatter matrix.}
Given $\bs r_{ij}$ as the normalized VLM output embedding when presented with view $i$ for object $j$, $\gamma_{ij}>0$ as its incoherence weight, 
and weighted candidate embedding $\bs z_{ij} \;\triangleq\; \gamma_{ij}\,\bs r_{ij}\in\mathbb{R}^d,$
we define the similarity vector
$
\bs b_{ij}^{(t)}
\;\triangleq\;
(\bs Z_j^{(t)})^\top \bs z_{ij}
\in \mathbb{R}^t.
$
Denoting the semantic scatter matrix obtained via $t$ currently collected views of object $j$ as $\bs S_j^{(t)}$, 
we can apply matrix algebra to write out the augmented scatter matrix after adding view $i$ as
\begin{equation}
    \label{eq:onestepupdate}
\bs S_j^{(t+1)}(i)
=
\begin{bmatrix}
\bs S_j^{(t)} & \bs b_{ij}^{(t)} \\
(\bs b_{ij}^{(t)})^\top & \gamma_{ij}^2+\lambda
\end{bmatrix}.
\end{equation}
With this formulation, we can derive a closed-form expression for the one-step uncertainty update.
For sake of clarity, we first define two intermediate random variables.
\begin{definition}[Redundancy score and Schur score]
    \label{def:redandschur}
    For any candidate view $i\in\mathcal{C}_j^{(t)}$ acquired for object $j$ with $t$ existing views, we define the redundancy score $\eta_{ij}^{(t)}$ and Schur score $s_{ij}^{(t)}$ as
    \begin{equation}
    \eta_{ij}^{(t)}
    \;\triangleq\;
    (\bs b_{ij}^{(t)})^\top (\bs S_j^{(t)})^{-1} \bs b_{ij}^{(t)},\quad
        s_{ij}^{(t)}
        \;\triangleq\;
        \gamma_{ij}^2+\lambda-\eta_{ij}^{(t)}.
    \end{equation}
\end{definition}
The redundancy score $\eta_{ij}^{(t)}$ measures how much newly acquired information is already known and 
is large when the new weighted caption embedding is well explained by the currently selected views. The Schur score $s_{ij}^{(t)}$ is 
the Schur complement of $\bs S_j^{(t)}$ in $\bs S_j^{(t+1)}(i)$ and 
captures the amount of new information that view $i$ would add after accounting for redundancy.
\begin{restatable}[Closed-form one-step uncertainty update]{theorem}{exactonestep}
\label{thm:exact-one-step}
For any candidate view $i\in\mathcal{C}_j^{(t)}$ acquired for object $j$ with $t$ existing views, the one-step uncertainty update is given by
\begin{equation}
u_j^{(t+1)}(i)-u_j^{(t)}
=
\log\Bigl(
s_{ij}^{(t)}
\Bigr).
\end{equation}
\end{restatable}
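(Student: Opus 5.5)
The argument rests on the block-determinant (Schur complement) formula applied to the augmented scatter matrix in \eqref{eq:onestepupdate}. First we check that this block form is correct. Write $\bs Z_j^{(t+1)}(i) = [\bs Z_j^{(t)},\, \bs z_{ij}]$. Its Gram matrix then has top-left block $(\bs Z_j^{(t)})^\top \bs Z_j^{(t)}$ and off-diagonal block $(\bs Z_j^{(t)})^\top \bs z_{ij} = \bs b_{ij}^{(t)}$. Because $\|\bs r_{ij}\|_2=1$, the bottom-right entry is $\bs z_{ij}^\top \bs z_{ij} = \gamma_{ij}^2$. Adding $\lambda \bs I_{t+1}$ gives exactly the displayed $\bs S_j^{(t+1)}(i)$.

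Next we establish invertibility and positivity. The matrix $\bs S_j^{(t)} = (\bs Z_j^{(t)})^\top \bs Z_j^{(t)} + \lambda \bs I_t$ is symmetric positive definite, since it is a PSD Gram matrix plus $\lambda \bs I_t$ with $\lambda>0$. Hence $(\bs S_j^{(t)})^{-1}$ exists, and $\eta_{ij}^{(t)}$ in \cref{def:redandschur} is well defined. Likewise $\bs S_j^{(t+1)}(i)$ is positive definite, so its determinant is positive.

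The core step is the standard block factorization
\[
\begin{bmatrix} \bs A & \bs b \\ \bs b^\top & c \end{bmatrix}
=
\begin{bmatrix} \bs I & \bs 0 \\ \bs b^\top \bs A^{-1} & 1 \end{bmatrix}
\begin{bmatrix} \bs A & \bs 0 \\ \bs 0^\top & c-\bs b^\top \bs A^{-1}\bs b \end{bmatrix}
\begin{bmatrix} \bs I & \bs A^{-1}\bs b \\ \bs 0^\top & 1 \end{bmatrix}.
\]
The outer triangular factors have unit determinant, so $\det \bs S_j^{(t+1)}(i) = \det \bs S_j^{(t)} \cdot s_{ij}^{(t)}$. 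We apply this with $\bs A=\bs S_j^{(t)}$, $\bs b=\bs b_{ij}^{(t)}$ and $c=\gamma_{ij}^2+\lambda$.

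Since both determinants are positive, $s_{ij}^{(t)}>0$, so its logarithm is defined. This follows because $s_{ij}^{(t)}$ is the Schur complement of a positive definite matrix. Alternatively, we can show it directly: $s_{ij}^{(t)} \ge \lambda$, because $\gamma_{ij}^2 - \bs b^\top \bs A^{-1} \bs b \ge \gamma_{ij}^2 - \bs z_{ij}^\top \bs Z (\bs Z^\top \bs Z)^{+} \bs Z^\top \bs z_{ij} \ge 0$. Taking logarithms gives $u_j^{(t+1)}(i) - u_j^{(t)} = \log s_{ij}^{(t)}$, as claimed.

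The only subtle point is this positivity of $s_{ij}^{(t)}$, which is needed for the logarithm to make sense. It is settled by the positive definiteness argument above, so no step is a serious obstacle. We only need to note explicitly that unit-norm embeddings give the diagonal entry $\gamma_{ij}^2$.
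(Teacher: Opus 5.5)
Your proof is correct and follows the paper's route. Both verify the augmented block form, note $\bs S_j^{(t)}\succ 0$, apply the Schur-complement determinant identity $\det \bs S_j^{(t+1)}(i)=\det \bs S_j^{(t)}\, s_{ij}^{(t)}$, and take logarithms; you additionally make explicit that $s_{ij}^{(t)}>0$ (indeed $s_{ij}^{(t)}\ge\lambda$), which the paper's proof of this theorem leaves implicit and only addresses later in the proof of \cref{prop:fosd-prop}.
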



\begin{restatable}[Criterion for one-step uncertainty reduction]{theorem}{exactcriterion}
\label{thm:exactcriterion}
For any candidate view $i\in\mathcal{C}_j^{(t)}$ acquired for object $j$ with $t$ existing views, we have the following uncertainty reduction criterion:
\[
u_j^{(t+1)}(i) < u_j^{(t)}
\quad\Longleftrightarrow\quad
\eta_{ij}^{(t)} > \gamma_{ij}^2+\lambda-1.
\]
\end{restatable}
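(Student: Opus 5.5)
The plan is to derive \cref{thm:exactcriterion} directly from \cref{thm:exact-one-step}, using only the monotonicity of the logarithm. The one subtle point is that the logarithm must be well defined, so I will first show that the Schur score is strictly positive.

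\textbf{Step 1: $s_{ij}^{(t)}>0$.} By \eqref{eq:onestepupdate}, $\bs S_j^{(t+1)}(i) = (\bs Z_j^{(t+1)})^\top \bs Z_j^{(t+1)} + \lambda \bs I_{t+1}$, where $\bs Z_j^{(t+1)} = [\bs Z_j^{(t)}, \bs z_{ij}]$. Hence it is symmetric positive definite, since $\lambda>0$. Likewise $\bs S_j^{(t)}$ is positive definite, so it is invertible and $\eta_{ij}^{(t)}$ is well defined and nonnegative. A principal submatrix of a positive definite matrix has a positive definite Schur complement. Therefore $s_{ij}^{(t)} = \gamma_{ij}^2+\lambda-\eta_{ij}^{(t)} > 0$. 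Equivalently, via the Schur determinant formula, $\det \bs S_j^{(t+1)}(i) = \det \bs S_j^{(t)}\cdot s_{ij}^{(t)}$, and both determinants are positive.

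\textbf{Step 2: reduce to a scalar inequality.} By \cref{thm:exact-one-step}, $u_j^{(t+1)}(i)-u_j^{(t)} = \log s_{ij}^{(t)}$. Since $\log$ is strictly increasing on $(0,\infty)$ and $s_{ij}^{(t)}>0$,
\[
u_j^{(t+1)}(i) < u_j^{(t)} \iff \log s_{ij}^{(t)} < 0 \iff s_{ij}^{(t)} < 1 .
\]

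\textbf{Step 3: substitute the definition.} By \cref{def:redandschur}, $s_{ij}^{(t)} < 1$ is exactly $\gamma_{ij}^2+\lambda-\eta_{ij}^{(t)} < 1$, that is, $\eta_{ij}^{(t)} > \gamma_{ij}^2+\lambda-1$. Every step is an equivalence, so this gives both directions of the claim.

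The only real obstacle is Step 1, the positivity of the Schur complement. If it failed, the logarithm would be undefined and the equivalence in Step 2 would break. The regularization $\lambda>0$ resolves this cleanly.
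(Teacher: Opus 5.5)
Your proof is correct and follows the paper's argument: apply \cref{thm:exact-one-step}, use the strict monotonicity of $\log$ to reduce the claim to $s_{ij}^{(t)}<1$, and rearrange using the definition of the Schur score. Your Step 1, which shows $s_{ij}^{(t)}>0$ from $\lambda>0$, is left implicit in the paper's proof of this criterion and only stated later in the proof of \cref{prop:fosd-prop}, so making it explicit adds rigor.
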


Intuitively, to reduce uncertainty, the redundancy (agreement with existing evidence) $\eta_{ij}$ of a new view needs to overcome both regularization and its own incoherence.

\paragraph{BOED-inspired proxy selection using the view-quality score.}
So far, our analysis suggests that for a candidate view $i\in\mathcal{C}_j^{(t)}$, the exact one-step uncertainty reduction is
$u_j^{(t)} - u_j^{(t+1)}(i)=-\log s_{ij}^{(t)},$ and the exact BOED utility is thus
$\Delta_j(i)\;\triangleq\;\mathbb{E}\left[-\log s_{ij}^{(t)}\,\middle|\,\mathcal{D}_j^{(t)}\right].$
However, evaluating $\Delta_j(i)$ exactly would require querying the VLM on every feasible candidate view, since $s_{ij}^{(t)}$ depends on the future caption embedding and future incoherence weight.

We avoid this cost by using the precomputed view-quality score $q_{ij}$ already used in DAAAM as a cheap surrogate for expected informativeness. 
For a fixed object $o_j$, we expect higher-quality views to result in smaller Schur scores, and thus larger one-step uncertainty reductions.
Towards this end, we make the following 
assumption on the relationship between $q_{ij}$ and $s_{ij}^{(t)}$.

\begin{assumption}[First-order stochastic dominance]
\label{assump:fosd}
For each object $o_j$, the conditional distribution of the random Schur score $s_{ij}^{(t)}$ is stochastically nonincreasing in $q_{ij}$. That is, for any two candidates $i,i'\in\mathcal{C}_j^{(t)}$,
\[
q_{ij}\ge q_{i'j}
\quad\Longrightarrow\quad
\mathbb{P}\left(s_{ij}^{(t)} \le x \,\middle|\, \mathcal{D}_j^{(t)}\right)
\;\ge\;
\mathbb{P}\left(s_{i'j}^{(t)} \le x \,\middle|\, \mathcal{D}_j^{(t)}\right)
\qquad
\forall x>0.
\]
Equivalently, conditional on the current evidence, a higher-quality view produces a Schur score that is smaller in the sense of first-order stochastic dominance (FOSD).
\end{assumption}

This assumption is natural because the Schur score $s_{ij}^{(t)}$
decreases when the future caption is less incoherent (smaller $\gamma_{ij}$) and more explained by the current evidence (larger $\eta_{ij}^{(t)}$). Thus, a higher-quality view is assumed to shift the distribution of $s_{ij}^{(t)}$ toward smaller values.

\begin{restatable}[Monotonicity of BOED utility under FOSD and proxy acquisition]{proposition}{fosdm}
\label{prop:fosd-mono}
Suppose the FOSD assumption holds for object $o_j$. Then for any two candidates $i,i'\in\mathcal{C}_j^{(t)}$,
$
q_{ij}\ge q_{i'j}
\quad\Longrightarrow\quad
\Delta_j(i)\ge \Delta_j(i').
$
Consequently, any maximizer of $q_{ij}$ over $\mathcal{C}_j^{(t)}$ is also a maximizer of the exact BOED utility $\Delta_j(i)$ and candidate
 views can be ranked by $q_{ij}$, yielding the proxy acquisition rule:
 \begin{equation}
\label{eq:rule}
i_j^\star
\;\in\;
\arg\max_{i\in\mathcal{C}_j^{(t)}} q_{ij}.
\end{equation}
\end{restatable}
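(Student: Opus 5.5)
The plan is to reduce the statement to the standard fact that first-order stochastic dominance is preserved under expectations of monotone functions. By \cref{thm:exact-one-step}, the one-step reduction for candidate $i$ is $u_j^{(t)}-u_j^{(t+1)}(i)=-\log s_{ij}^{(t)}$. Hence $\Delta_j(i)=\mathbb{E}[g(s_{ij}^{(t)})\mid\mathcal{D}_j^{(t)}]$ with $g(x)=-\log x$, a strictly decreasing function on $(0,\infty)$. One point needs checking first: $s_{ij}^{(t)}>0$ almost surely. This holds because $\bs S_j^{(t+1)}(i)=(\bs Z_j^{(t+1)})^\top\bs Z_j^{(t+1)}+\lambda\bs I_{t+1}$ is positive definite, and its Schur complement with respect to the positive definite block $\bs S_j^{(t)}$ is therefore positive. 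In fact $s_{ij}^{(t)}\ge\lambda$, since the smallest eigenvalue of the augmented matrix is at least $\lambda$. So $g$ is well defined, and $g(s)\le-\log\lambda$ is bounded above.

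Next, I will use the layer-cake (tail-integral) representation. Write $F_i(x)=\mathbb{P}(s_{ij}^{(t)}\le x\mid\mathcal{D}_j^{(t)})$. For a decreasing $g$, the event $\{g(s)>y\}$ equals $\{s<g^{-1}(y)\}=\{s<e^{-y}\}$, and the assumption concerns $\le$. To avoid the strict/non-strict issue, I will note that FOSD for all $x>0$ implies the same inequality for left limits: $\mathbb{P}(s<x)=\lim_{x'\uparrow x}F(x')$. The inequality $\mathbb{P}(s_{ij}<x)\ge\mathbb{P}(s_{i'j}<x)$ then follows from \cref{assump:fosd}.

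Then I write, for any random variable $X=g(s)$ bounded above by $M=-\log\lambda$,
\[
\mathbb{E}[X]=M-\int_{-\infty}^{M}\mathbb{P}(X\le y)\,dy .
\]
Since $\mathbb{P}(X\le y)=1-\mathbb{P}(s<e^{-y})$, the FOSD inequality makes the integrand pointwise smaller for $i$ than for $i'$. This yields $\Delta_j(i)\ge\Delta_j(i')$, with the usual convention that both sides may equal $-\infty$. An alternative route is a coupling via quantile functions: set $s=F^{-1}(U)$ with a shared uniform $U$. FOSD then gives $s_{ij}\le s_{i'j}$ pointwise, so $g(s_{ij})\ge g(s_{i'j})$, and taking expectations finishes the argument. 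I would probably present the coupling, since it is cleaner.

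The second claim is immediate. If $i^\dagger\in\arg\max_{i\in\mathcal{C}_j^{(t)}}q_{ij}$, then $q_{i^\dagger j}\ge q_{i'j}$ for every $i'\in\mathcal{C}_j^{(t)}$. The monotonicity just shown then gives $\Delta_j(i^\dagger)\ge\Delta_j(i')$ for all $i'$, so $i^\dagger$ attains the maximum in \eqref{eq:argmax}. The candidate set is finite and nonempty whenever a refinement step is taken, so the maximum exists.

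The main obstacle is integrability. $\mathbb{E}[-\log s]$ could be $-\infty$ if $s$ has heavy right tails; it cannot be $+\infty$, because $s\ge\lambda$. I will handle this either by allowing extended-real values, where the inequality still holds via the coupling and monotone convergence, or by observing that $s_{ij}^{(t)}\le\gamma_{ij}^2+\lambda$ with $\gamma_{ij}\le e^{\alpha}$. The latter makes $s$ bounded in $[\lambda,e^{2\alpha}+\lambda]$, so every expectation is finite. I expect the bounded version to be the one to use.
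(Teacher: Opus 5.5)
Your proposal is correct and follows essentially the paper's route. You rewrite $\Delta_j(i)=\mathbb{E}[-\log s_{ij}^{(t)}\mid\mathcal{D}_j^{(t)}]$ via \cref{thm:exact-one-step} and use the fact that first-order stochastic dominance is preserved under expectations of decreasing functions; the paper simply cites that fact as standard, while you add a quantile-coupling proof of it, the bound $s_{ij}^{(t)}\in[\lambda,\,e^{2\alpha}+\lambda]$ guaranteeing finite expectations (which the paper only assumes exist), and the argmax consequence, all of which are sound.
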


\begin{restatable}[Probabilistic guarantee of uncertainty reduction via heuristic]{theorem}{fosdprob}
\label{prop:fosd-prop}
Suppose that for two feasible candidate views $i,i'\in\mathcal C_j^{(t)}$, $q_{ij}\ge q_{i'j}$
and the corresponding Schur scores $s_{ij}^{(t)}, s_{i'j}^{(t)}$ satisfies \cref{assump:fosd}.
Then
\[
\mathbb{P}\left(u_j^{(t+1)}(i)<u_j^{(t)} \,\middle|\, \mathcal D_j^{(t)}\right)
\;\ge\;
\mathbb{P}\left(u_j^{(t+1)}(i')<u_j^{(t)} \,\middle|\, \mathcal D_j^{(t)}\right).
\]
\end{restatable}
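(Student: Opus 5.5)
The plan is to turn the event of uncertainty reduction into an event of the form $\{s_{ij}^{(t)}<1\}$, and then apply \cref{assump:fosd} at (or just below) the threshold $x=1$. By \cref{thm:exact-one-step} we have $u_j^{(t+1)}(i)-u_j^{(t)}=\log s_{ij}^{(t)}$. This requires $s_{ij}^{(t)}>0$. That positivity holds because $\bs S_j^{(t+1)}(i)$ in \eqref{eq:onestepupdate} is a Gram matrix plus $\lambda\bs I_{t+1}$, hence positive definite, and so its Schur complement with respect to $\bs S_j^{(t)}$ is positive. Consequently
\[
\bigl\{u_j^{(t+1)}(i)<u_j^{(t)}\bigr\}=\bigl\{\log s_{ij}^{(t)}<0\bigr\}=\bigl\{s_{ij}^{(t)}<1\bigr\},
\]
which is equivalently the criterion $\eta_{ij}^{(t)}>\gamma_{ij}^2+\lambda-1$ of \cref{thm:exactcriterion}. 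The same identity holds with $i'$ in place of $i$. The claim therefore reduces to showing
\[
\mathbb{P}\bigl(s_{ij}^{(t)}<1\mid\mathcal D_j^{(t)}\bigr)\ge\mathbb{P}\bigl(s_{i'j}^{(t)}<1\mid\mathcal D_j^{(t)}\bigr).
\]

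The assumption is stated for non-strict events $\{s\le x\}$, while the target event is strict. To bridge this, write $\{s<1\}=\bigcup_{k\ge1}\{s\le 1-1/k\}$ for $k\ge2$, so that every threshold $x=1-1/k$ is positive. By continuity of probability from below,
\[
\mathbb{P}(s<1\mid\mathcal D_j^{(t)})=\lim_{k\to\infty}\mathbb{P}(s\le 1-1/k\mid\mathcal D_j^{(t)}).
\]
For each $k$, \cref{assump:fosd} with $q_{ij}\ge q_{i'j}$ gives the inequality of conditional CDFs at $x=1-1/k$. Inequalities are preserved under limits, and this yields the claim.

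The only delicate point is this strict-versus-non-strict step, together with the positivity of $s$ that makes the logarithm and the thresholds legitimate. Everything else is direct substitution of the closed-form update.
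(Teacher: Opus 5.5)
Your proposal is correct and follows the same route as the paper. In both, the closed-form update and the positivity of the Schur complement turn $\{u_j^{(t+1)}(i)<u_j^{(t)}\}$ into $\{s_{ij}^{(t)}<1\}$, and \cref{assump:fosd} is then applied near the threshold $1$. Your continuity-from-below step over $x=1-1/k$ is more careful than the paper's proof: the paper evaluates the CDF inequality only at $x=1$, which controls $\{s\le 1\}$ rather than the strict event $\{s<1\}$ the claim needs, and your limit closes that gap.
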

This gives us a probabilistic guarantee for using \cref{eq:rule} as the rule to select more frames via heuristic score $q_{ij}$ to reduce uncertainty.

\paragraph{Overall active view refinement procedure.}
Backed by our theoretical analysis, for each object $o_j$ with $u_j>\tau$, we allocate a per-object refinement budget $B$. We collect feasible additional views $\mathcal{C}_j^{(t)}$ for 
$o_j$ via \cref{eq:cj-set}. Next, we rank the candidates in $\mathcal{C}_j^{(t)}$ by the 
heuristic view-quality score $q_{ij}$ and select the top-$B$ views. The VLM is then queried on these additional views sequentially, 
and each resulting caption is incorporated into the object's multi-view semantic representation. 
After each update, the object-level uncertainty is recomputed. Refinement stops when the uncertainty falls below $\tau$, 
the budget $B$ is exhausted, or no feasible candidate views remain. Note that our refinement procedure can be understood as an ``overt attention'' process to reduce uncertainty.

\subsection{Multi-view Descriptions Aggregation}
After uncertainty-driven view selection, each object is associated with a small set of captions obtained from multiple views.
We fuse these captions into a single object description using a constrained LLM-based aggregation step.
Captions are ordered by acquisition stage (initial views first, refinement views last) and the LLM is instructed to privilege 
later (refinement) captions when resolving conflicts. When disagreement exists only among initial views, the fused caption preserves 
the alternatives using cautious language (e.g., \emph{``white or gray''}) rather than collapsing ambiguity prematurely. 
The LLM is further constrained to use only information present in at least one source caption, preventing hallucinated facts.
The full prompt specification and output format are given in \cref{app:llm-fusion}; the overall UQ-DAAAM procedure is summarized in \cref{app:alg}.
\section{Experiments}
\label{sec:experiments}

In our implementation, we set $r=3$ to gather diverse perspectives on each object to measure the semantic scatter as the uncertainty. 
We use a budget of $B=2$ for refinement, and the uncertainty threshold $\tau$ is set to the top 20-th percentile of the object uncertainty distribution, 
selected on a held-out calibration dataset, from which we also derive the statistics to normalize the uncertainty scores. In the calculation
of $u$, we use a small regularization constant of $\lambda=1e-8$ to ensure numerical stability of the log-determinant. We set the temperature for caption generation to 1. 
For caption fusion, we use GPT-5-mini. Visualizations and analysis of qualitative results are in \cref{app:qual}.

\subsection{Budgeted Refinement on OC-NaVQA}
\label{sec:exp_ocnavqa}
\begin{table}[t]
\centering
\resizebox{\linewidth}{!}{%

\begin{tabular}{lcccccccccccc}
\toprule
 & \multicolumn{4}{c}{Descriptive Question Accuracy $\uparrow$} & \multicolumn{4}{c}{Position Error [m] $\downarrow$} & \multicolumn{4}{c}{Temporal Error [min] $\downarrow$} \\
\cmidrule(lr){2-5} \cmidrule(lr){6-9} \cmidrule(lr){10-13}
Method & Short & Medium & Long & All & Short & Medium & Long & All & Short & Medium & Long & All \\
\midrule
{DAAAM} & 0.679 & 0.753 & 0.690 & 0.711 & 22.07 & 41.69 & 53.47 & 41.75 & 3.669 & 0.914 & 1.438 & 1.792 \\
{DAAAM-RR} & 0.683 & 0.751 & 0.694 & 0.716 & 21.52 & 41.19 & 53.34 & 41.78 & 3.748 & 0.918 & 1.428 & 1.769 \\
{DAAAM-QR} & 0.715 & 0.760 & 0.748 & 0.738 & 17.83 & 29.47 & 57.26 & 40.14 & 2.284 & 0.905 & 1.408 & 1.734 \\
{DAAAM-NC}~\cite{Khan24cvpr-consistency} & 0.681 & 0.755 & 0.692 & 0.713 & 21.88 & 41.53 & 53.39 & 41.62 & 3.648 & 0.912 & 1.434 & 1.783 \\
{DAAAM-LN}~\cite{Malinin20arxiv-uncertaintyAR} & 0.682 & 0.756 & 0.694 & 0.714 & 21.74 & 41.45 & \textbf{53.31} & 41.55 & 3.623 & 0.910 & 1.430 & 1.776 \\
{DAAAM-Ent}~\cite{Kuhn23arxiv-semanticUnc} & 0.703 & 0.763 & 0.734 & 0.733 & 20.13 & 36.82 & 55.29 & 40.33 & 2.891 & 0.907 & 1.368 & 1.718 \\
{DAAAM-ES}~\cite{Chen24arxiv-inside} & 0.714 & 0.769 & 0.748 & 0.743 & 19.28 & 34.61 & 55.87 & 39.69 & 2.703 & \textbf{0.903} & 1.341 & 1.683 \\
{UQ-DAAAM} (Ours) & \textbf{0.731} & \textbf{0.778} & \textbf{0.786} & \textbf{0.761} & \textbf{15.46} & \textbf{25.35} & 58.94 & \textbf{37.84} & \textbf{1.948} & {0.910} & \textbf{1.275} & \textbf{1.589} \\
\bottomrule
\end{tabular}
}
\caption{Results on OC-NaVQA benchmark~\cite{Gorlo26cvpr-DAAAM,Anwar25icra-remembr}. All models use GPT-5-mini.}
\label{tab:navqa_full}
\end{table}

We evaluate UQ-DAAAM on spatio-temporal question answering (SQA) using the CODa dataset and the object-centric NaVQA (OC-NaVQA) 
benchmark introduced by DAAAM~\cite{Gorlo26cvpr-DAAAM}. DAAAM motivates OC-NaVQA as a large-scale, long-horizon extension of NaVQA~\cite{Anwar25icra-remembr}: 
it re-annotates spatial queries with actual object positions, improves label quality with a custom 3D labeling tool, 
and evaluates over the full sequence context rather than a restricted observation window, 
resulting in settings of up to 35.8 minutes and 1.64 km of traveled distance. 

Following DAAAM, we use the same task categories and report the same three metrics: 
\emph{question accuracy}, \emph{positional error}, and \emph{temporal error}. 
DAAAM uses these metrics both on the original NaVQA benchmark and on OC-NaVQA, 
where all methods are paired with the same reasoning model over the constructed memory. 
We adopt the same evaluation protocol so that improvements can be attributed to the memory 
representation and refinement strategy rather than changes in the reasoning backend. We report the main comparison at a fixed refinement budget in \cref{tab:navqa_full}. 
Specifically, we compare: \textbf{DAAAM}: the original pipeline with no post hoc refinement; \textbf{DAAAM-RR}: after the initial DAAAM pass, additional feasible object-view queries are allocated uniformly
 at random on objects under the same extra budget; \textbf{DAAAM-QR}: after the initial DAAAM pass, the extra budget is allocated only for low-quality objects via DAAAM's original 
view-quality score $q_{ij}$, without uncertainty; \textbf{UQ-DAAAM}: our full method, which uses object-level uncertainty to prioritize refinement and then updates memory 
with the additional multi-view semantic evidence.
In addition, we compare against 4 different UQ methods implemented on top of DAAAM: \textbf{DAAAM-NC}~\cite{Khan24cvpr-consistency}, which uses neighborhood consistency; 
\textbf{DAAAM-LN}~\cite{Malinin20arxiv-uncertaintyAR}, which uses next-token log likelihood as an uncertainty signal; 
\textbf{DAAAM-Ent}~\cite{Kuhn23arxiv-semanticUnc}, which uses entropy as an uncertainty signal; 
and \textbf{DAAAM-ES}~\cite{Chen24arxiv-inside}, which uses evidence score as an uncertainty signal. The details for the baselines 
can be found in \cref{app:uq-qual}.
As the results show in \cref{tab:navqa_full}, UQ-DAAAM outperforms the baselines across all three metrics, with the largest gains in question accuracy and positional error. 
This demonstrates that uncertainty-aware refinement can more effectively allocate a fixed semantic-query budget to improve the quality. 





\subsection{Uncertainty Reduction}
\label{sec:exp_uq_reduction}

In addition to downstream task performance, we directly evaluate whether refinement achieves its immediate objective: 
reducing object-level semantic uncertainty. For each object \(o_j\), we compute its uncertainty before refinement, 
\(u_j^{\mathrm{pre}}\), and after refinement, \(u_j^{\mathrm{post}}\). We focus in particular on the subset of 
objects that are initially marked as uncertain, i.e., $\mathcal{U}_{\tau}^{\mathrm{pre}}
=
\{\, j \in [m] : u_j^{\mathrm{pre}} > \tau \,\}.
$
We then measure how effectively each method reduces uncertainty within this set.

Our primary metric is the \emph{uncertainty reduction rate}, defined as the percentage of initially uncertain objects whose uncertainty falls below the threshold after refinement:
$
\mathrm{URR}
=
\frac{
\bigl|\{\, j \in \mathcal{U}_{\tau}^{\mathrm{pre}} : u_j^{\mathrm{post}} \le \tau \,\}\bigr|
}{
|\mathcal{U}_{\tau}^{\mathrm{pre}}|
}
$
This metric directly captures how often refinement succeeds in converting an uncertain object into a semantically resolved one. 
We compare the same DAAAM-RR, DAAAM-QR, and UQ-DAAAM variants from \cref{sec:exp_ocnavqa}; uncertainty-as-error-predictor results are deferred to \cref{app:uq-qual}.



The uncertainty-reduction results in \cref{fig:uq_reduction} show a clear advantage for UQ-DAAAM over both non-uncertainty-based refinement baselines.
At a fixed budget $B=2$, UQ-DAAAM achieves a URR of 92.3\%, compared to 34.2\% for quality-based refinement and only 5.6\% for random refinement.
UQ-DAAAM shifts the post-refinement distribution markedly to the left, showing a much larger and more consistent reduction in uncertainty across objects. 
Overall, these results indicate that the proposed uncertainty-aware refinement is substantially more effective.
\begin{figure}[t]
\begin{minipage}[t]{0.48\linewidth}
  \centering
  \includegraphics[width=\linewidth]{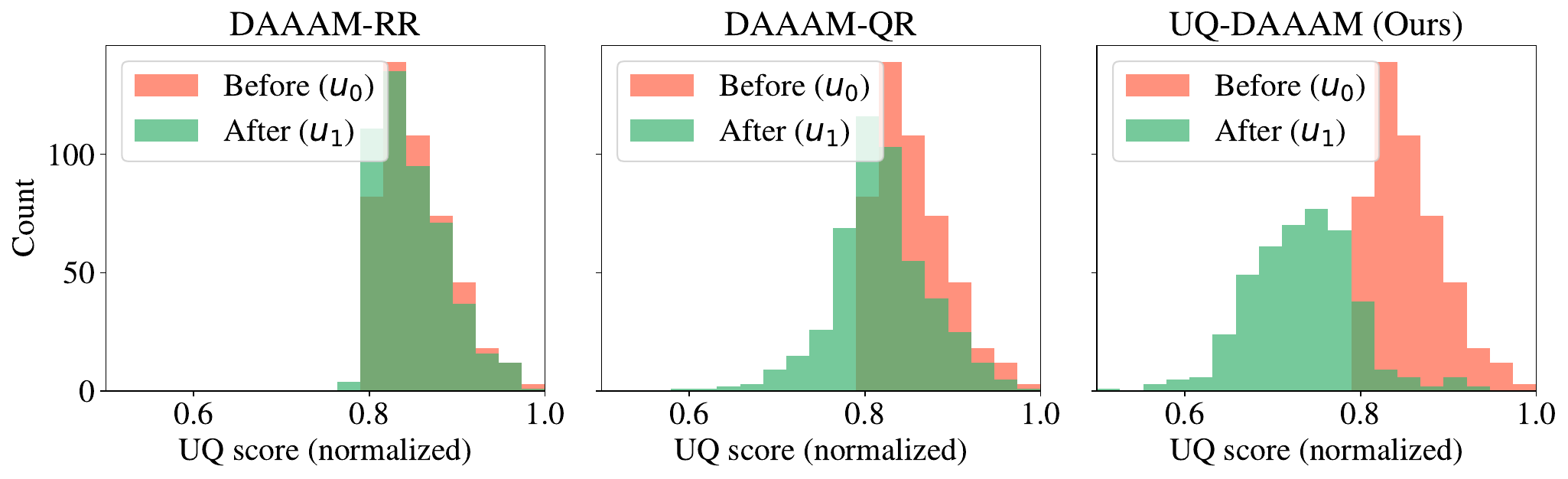}
  \vspace{0.5em}
  \resizebox{\linewidth}{!}{%
  \begin{tabular}{lccc}
  \toprule
   & DAAAM-RR & DAAAM-QR & UQ-DAAAM \\
  \midrule
  URR & 5.6\% & 34.2\% & 92.3\% \\
  Mean $\Delta_u$ & -0.006 & -0.036 & -0.121 \\
  \bottomrule
  \end{tabular}
  }
  \caption{Uncertainty reduction comparison across refinement strategies. 
  \emph{Top:} Uncertainty reduction distributions. 
  \emph{Bottom:} URR and mean uncertainty reduction at fixed budget $B=2$.}
  \label{fig:uq_reduction}
\end{minipage}
\hfill
\begin{minipage}[t]{0.48\linewidth}
  \centering
  \includegraphics[width=\linewidth]{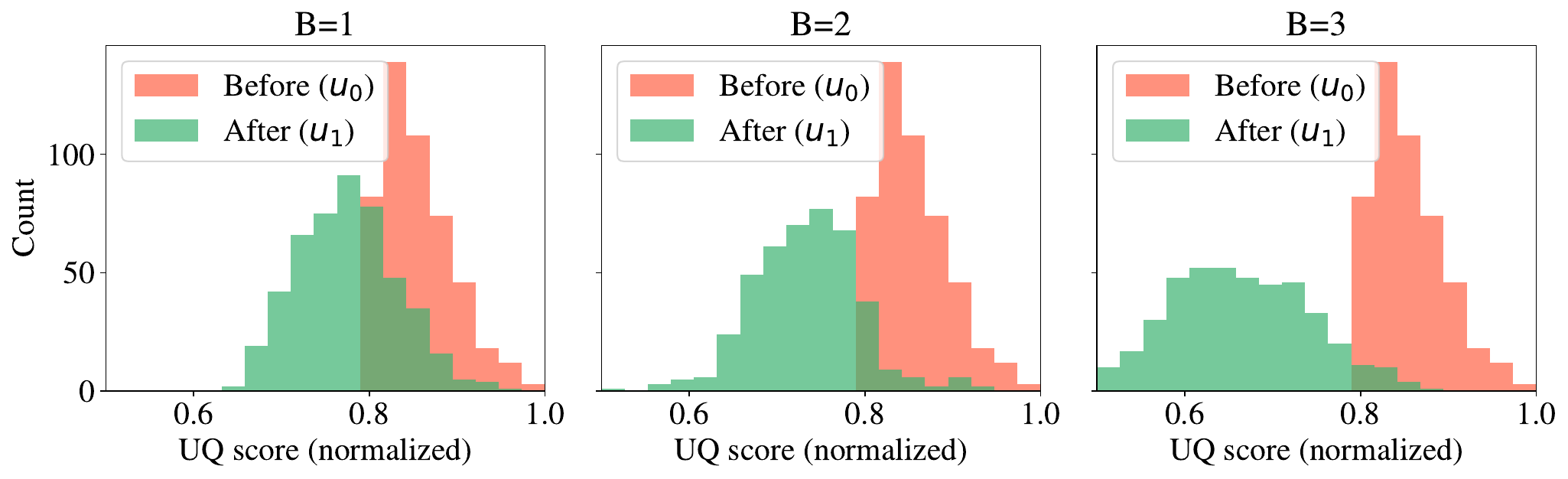}
  \vspace{0.5em}
  \resizebox{0.67\linewidth}{!}{%
  \begin{tabular}{lccc}
  \toprule
   & $B=1$ & $B=2$ & $B=3$ \\
  \midrule
  URR & 68.9\% & 92.3\% & 94.4\%\\
  Mean $\Delta_u$ &-0.079 & -0.121&-0.169 \\
  \bottomrule
  \end{tabular}
  }
  \caption{Uncertainty reduction comparison across budgets. 
  \emph{Top:} per-budget uncertainty reduction curves. 
  \emph{Bottom:} URR and mean uncertainty reduction.}
  \label{fig:uq_reduction_right}
\end{minipage}
\end{figure}

\subsection{Ablations}
\begin{table}[t]
\centering
\resizebox{\linewidth}{!}{%
\begin{tabular}{lcccccccccccccccccccc}
\toprule
 & \multicolumn{3}{c}{Budget} & \multicolumn{4}{c}{Object} & \multicolumn{3}{c}{View} & \multicolumn{5}{c}{Fusion} & \multicolumn{4}{c}{Threshold} \\
\cmidrule(lr){2-4} \cmidrule(lr){5-8} \cmidrule(lr){9-11} \cmidrule(lr){12-16} \cmidrule(lr){17-20}
 & $B{=}1$ & $B{=}2$ & $B{=}3$ & LQ & CD & UW & Full & R & $q_{ij}$ & Dir & Best & IO & RO & UC & C & 10 & 20 & 30 & 40 \\
\midrule
Accuracy       & 0.742 & 0.761 & \textbf{0.774} & 0.748 & 0.737 & 0.752 & \textbf{0.761} & 0.741 & \textbf{0.761} & 0.754 & 0.754 & 0.728 & 0.739 & 0.747 & \textbf{0.761} & 0.748 & 0.761 & 0.767 & \textbf{0.770} \\
Position [m]   & 38.46 & 37.84 & \textbf{36.99} & 38.63 & 39.25 & 38.47 & \textbf{37.84} & 39.18 & \textbf{37.84} & 38.39 & 38.41 & 40.12 & 39.47 & 38.89 & \textbf{37.84} & 39.31 & 37.84 & 37.21 & \textbf{37.09} \\
Temporal [min] & 1.712 & 1.589 & \textbf{1.539} & 1.651 & 1.703 & 1.638 & \textbf{1.589} & 1.681 & \textbf{1.589} & 1.624 & 1.631 & 1.762 & 1.714 & 1.668 & \textbf{1.589} & 1.674 & 1.589 & 1.558 & \textbf{1.551} \\
\bottomrule
\end{tabular}
}
\caption{Ablation study. Each group varies one design axis while holding others fixed.}
\label{tab:ablation}
\end{table}

We perform a comprehensive ablation study to separate the effect of uncertainty estimation, object selection, 
view refinement, and caption fusion.

\paragraph{Performance versus budget.}
We vary the additional refinement budget \(B\) and plot uncertainty reduction performance as a 
function of the number of extra caption queries. This reveals the refinement-efficiency regime in which UQ-DAAAM provides the largest gains.
Results shown in \cref{fig:uq_reduction_right} suggest that increasing the refinement budget consistently improves uncertainty reduction, but with diminishing returns. 
As the per-object budget increases from $B=1$ to $B=3$, the post-refinement uncertainty distribution shifts progressively leftward, indicating that additional queried 
views provide increasingly stronger semantic disambiguation. 
Notably, most of the gain is already achieved by $B=2$, suggesting that a small number of extra views is sufficient to resolve uncertainty for the majority of objects, while larger 
budgets provide smaller but still measurable additional benefit.
The downstream OC-NaVQA results are consistent with the uncertainty-reduction analysis. As shown in ``Budget" of 
\cref{tab:ablation}, larger refinement budgets lead to better task performance, and all 
budgeted UQ-DAAAM variants outperform vanilla DAAAM. As the per-object refinement budget increases, all three metrics
improve, with the largest gain from $B=1$ to $B=2$, consistent with the uncertainty-reduction results. Overall, 
these results show that reducing object-level uncertainty translates into measurable downstream gains in 
spatio-temporal question answering.

\paragraph{Refinement object selection rule.}
We ablate the rule used to choose which objects receive additional refinements (Table 2, ``Object''): lowest-quality (\textbf{LQ}), 
highest cross-view disagreement via caption embedding dispersion (\textbf{CD}), uncertainty without incoherence weighting (\textbf{UW}), 
and full uncertainty (\textbf{Full}). Full leads across all metrics; UW comes closest, confirming that incoherence 
weighting adds a consistent but modest gain over dispersion alone, while CD trails, indicating that raw caption disagreement without visual 
grounding is a poor proxy for semantic unreliability.

\paragraph{Refinement view selection heuristic.}
Conditioned on the same object selected for refinement, we ablate the rule used to choose the next additional view. 
We compare three policies: selecting a random visible view (\textbf{R}), 
selecting the visible view with the highest heuristic quality score $q_{ij}$, 
and a direction-based heuristic \cite{Liu23icra-activeMetric} that prioritizes views according to the object's viewing direction (\textbf{Dir}). 
This comparison isolates whether refinement gains arise simply from better geometric visibility, 
from DAAAM's original quality heuristic, or from choosing views that are expected to be semantically more informative. As shown under the
``View'' tab in \cref{tab:ablation}, random view selection (R) noticeably degrades performance. 
Dir closely tracks $q_{ij}$, suggesting that geometric visibility already recovers most of the gain, but the quality-weighted score gives 
additional filtering.

\paragraph{Fusion strategy.}
We ablate how multi-view captions are consolidated into a final object description in order to separate the effect of semantic \
fusion from the effects of uncertainty estimation and refinement. We compare five variants: 
no fusion, which uses the single best caption only (\textbf{Best}); 
initial-only fusion, which aggregates only captions from the initial selected views (\textbf{IO}); 
refined-only fusion, which aggregates only captions acquired during refinement (\textbf{RO}); 
unconstrained LLM fusion, which merges all captions without explicit conflict-handling rules (\textbf{UC}); 
and our constrained stage-aware fusion, which prefers refined captions when they conflict with initial ones while 
preserving unresolved alternatives when disagreement remains within the initial set (\textbf{C}). 
This comparison tests whether the gains of UQ-DAAAM come merely from acquiring more views, 
or from consolidating multi-view semantic evidence in a way that is conservative, conflict-aware,
and aligned with the refinement process. As shown under the ``Fusion'' tab in \cref{tab:ablation}, 
C achieves the best performance. The monotone improvement from IO $\to$ RO $\to$ UC $\to$ Best $\to$ C 
shows that each level of conflict-awareness and stage-sensitivity provides additional gain, 
and that the benefit of refinement is substantially eroded when captions are fused naively.

\paragraph{Refinement threshold.}
We ablate the effect of the uncertainty threshold used to trigger refinement, which determines how aggressively the
 system allocates additional semantic-query budget. Specifically, we use the 10th,  20th,  30th, and 
40th quantiles of the object-level uncertainty distribution as refinement thresholds, 
so that progressively lower thresholds cause a larger fraction of objects to be refined. 
This experiment exposes the tradeoff between semantic-query cost and downstream gain: 
stricter thresholds focus refinement on only the most uncertain objects,
while more permissive thresholds increase coverage at the cost of additional queries. 
As shown under the ``Threshold'' tab in \cref{tab:ablation}, performance improves 
substantially from the 10th to the 20th quantile, and continues to improve at the 30th quantile. 
The 40th quantile yields only marginal further gain, indicating diminishing returns beyond the 
30th percentile and confirming that the default 20 offers a favorable cost-performance tradeoff.

\subsection{Memory and Runtime Details}
Across 7 CODa sequences (98,866 frames), UQ-DAAAM runs at 13.5\,Hz on a single RTX 4090, comfortably exceeding the 10\,Hz sensor rate.
The cross-view BLP solver adds a mean latency of only 1.49\% of total frame processing time.
Each object receives $3.44 \pm 0.68$ cross-view captions on average; uncertainty is computed via \texttt{numpy.linalg.slogdet} on CPU with negligible overhead.
Of the 18,298 objects annotated across all sequences, 6,324 (34.6\%) were selected for refinement, receiving $1.32 \pm 0.47$ additional views each. Active view selection reduced the mean UQ score from $0.56$ to $0.32$, indicating substantially increased caption agreement.
GPU memory remains within the 24\,GB envelope of the RTX 4090, as the UQ and BLP components are CPU-only and the DAM-3B is shared between initial annotation and refinement.
\section{Conclusion}
\label{sec:conclusion}

We presented UQ-DAAAM, an uncertainty-aware extension of DAAAM for 4D spatial-temporal memory. Our key idea is to endow object-centric memory 
with an explicit notion of semantic uncertainty, so that the system can recognize when multi-view descriptions remain ambiguous and 
selectively acquire additional evidence. To this end, we introduced an object-level uncertainty measure that combines cross-view semantic 
disagreement with per-view incoherence, a refinement strategy that allocates a query budget to uncertain objects and informative views, 
and a fusion mechanism that consolidates refined multi-view descriptions into stable object summaries.
Experiments show that UQ-DAAAM improves uncertainty reduction and spatio-temporal reasoning over baselines. We show qualitative examples in \cref{app:qual} and list
the limitations of the work in \cref{app:limitations}.

\bibliographystyle{conf}  
\small
\bibliography{new_refs}
\normalsize

\clearpage
\onecolumn

\begin{center}
    {\Large \bf Supplementary Material}
\end{center}
\setcounter{section}{0}
\appendix
\section{Object-level Uncertainty Matters in DAAAM}
\label{app:objectlevel}
DAAAM is designed to build grounded semantic memory from a sparse set of representative views. 
However, not all objects are equally easy to describe from a limited number of views. 
Some objects remain semantically stable after only a few observations, while others exhibit persistent ambiguity due to clutter, 
viewpoint effects, or fine-grained attributes.
The uncertainty score $u_j$ provides a principled way to detect these difficult cases. Rather than treating all objects uniformly, 
the system can identify which objects are already semantically well characterized and which objects require further refinement. 
This makes uncertainty quantification a key interface between memory construction and active view acquisition: 
it summarizes the quality of the current semantic evidence for each object and supports downstream decisions about 
where additional sensing or captioning effort should be spent.
\section{Augmented Block Form after Adding a New View}
\label{app:closed-form}
Recall
\[
\bs Z_j^{(t)}
=
\bigl[\bs z_{j1},\dots,\bs z_{jt}\bigr]\in\mathbb{R}^{d\times t}
\]
denote the current weighted embedding matrix, where each column is a weighted caption embedding, and let
\[
\bs S_j^{(t)}
=
(\bs Z_j^{(t)})^\top \bs Z_j^{(t)}+\lambda \bs I_t
\]
be the corresponding regularized Gram matrix. For a candidate view $i$, let
\[
\bs z_{ij}=\gamma_{ij}\bs r_{ij}\in\mathbb{R}^d
\]
be the weighted embedding that would be obtained if view $i$ were queried. After adding this candidate, the augmented weighted embedding matrix becomes
\[
\bs Z_j^{(t+1)}(i)
=
\bigl[\,\bs Z_j^{(t)},\bs z_{ij}\,\bigr]
\in\mathbb{R}^{d\times (t+1)}.
\]
Therefore,
\[
\bs S_j^{(t+1)}(i)
=
\bigl(\bs Z_j^{(t+1)}(i)\bigr)^\top \bs Z_j^{(t+1)}(i)+\lambda \bs I_{t+1}.
\]
Expanding the product gives
\[
\bigl(\bs Z_j^{(t+1)}(i)\bigr)^\top \bs Z_j^{(t+1)}(i)
=
\begin{bmatrix}
(\bs Z_j^{(t)})^\top \\
\bs z_{ij}^\top
\end{bmatrix}
\begin{bmatrix}
\bs Z_j^{(t)} & \bs z_{ij}
\end{bmatrix}
=
\begin{bmatrix}
(\bs Z_j^{(t)})^\top \bs Z_j^{(t)} & (\bs Z_j^{(t)})^\top \bs z_{ij} \\
\bs z_{ij}^\top \bs Z_j^{(t)} & \bs z_{ij}^\top \bs z_{ij}
\end{bmatrix}.
\]
Now define
\[
\bs b_{ij}^{(t)}
\;\triangleq\;
(\bs Z_j^{(t)})^\top \bs z_{ij}\in\mathbb{R}^t.
\]
Since $\|\bs r_{ij}\|_2=1$ and $\bs z_{ij}=\gamma_{ij}\bs r_{ij}$, we also have
\[
\bs z_{ij}^\top \bs z_{ij}
=
\|\bs z_{ij}\|_2^2
=
\gamma_{ij}^2.
\]
Adding the regularization term $\lambda \bs I_{t+1}$ blockwise yields
\[
\bs S_j^{(t+1)}(i)
=
\begin{bmatrix}
(\bs Z_j^{(t)})^\top \bs Z_j^{(t)}+\lambda \bs I_t & \bs b_{ij}^{(t)} \\
(\bs b_{ij}^{(t)})^\top & \gamma_{ij}^2+\lambda
\end{bmatrix}
=
\begin{bmatrix}
\bs S_j^{(t)} & \bs b_{ij}^{(t)} \\
(\bs b_{ij}^{(t)})^\top & \gamma_{ij}^2+\lambda
\end{bmatrix}.
\]
Thus, the updated regularized Gram matrix has the claimed augmented block form.
\section{Proof of \cref{thm:exact-one-step}}
\exactonestep*
\begin{proof}
Recall that
\[
\bs S_j^{(t+1)}(i)
=
\begin{bmatrix}
\bs S_j^{(t)} & \bs b_{ij}^{(t)} \\
(\bs b_{ij}^{(t)})^\top & \tilde\gamma_{ij}^2+\lambda
\end{bmatrix},
\]
where $\bs S_j^{(t)}\in\mathbb{R}^{t\times t}$ is positive definite because
\[
\bs S_j^{(t)}=(\bs Z_j^{(t)})^\top \bs Z_j^{(t)}+\lambda I_t,
\qquad \lambda>0.
\]
Hence $(\bs S_j^{(t)})^{-1}$ exists.
For a block matrix of the form
\[
\begin{bmatrix}
\bs A & \bs B\\
\bs C & \bs D
\end{bmatrix},
\]
with $\bs A$ invertible, the determinant can be written using the Schur complement of $\bs A$:
\[
\det\begin{bmatrix}
\bs A & \bs B\\
\bs C & \bs D
\end{bmatrix}
=
\det(\bs A)\,\det(\bs D-\bs C\bs A^{-1}\bs B).
\]
In our case, identify
\[
\bs A=\bs S_j^{(t)},\qquad
\bs B=\bs b_{ij}^{(t)},\qquad
\bs C=(\bs b_{ij}^{(t)})^\top,\qquad
\bs D=\gamma_{ij}^2+\lambda.
\]
Therefore, the Schur complement of $\bs S_j^{(t)}$ in $\bs S_j^{(t+1)}(i)$ is
\[
\bs D-\bs C\bs A^{-1}\bs B
=
\gamma_{ij}^2+\lambda
-
(\bs b_{ij}^{(t)})^\top (\bs S_j^{(t)})^{-1} \bs b_{ij}^{(t)}.
\]
Since $D$ is a scalar, its determinant is just itself, so applying the block determinant identity yields
\[
\det \bs S_j^{(t+1)}(i)
=
\det \bs S_j^{(t)}
\Bigl(
\gamma_{ij}^2+\lambda
-
(\bs b_{ij}^{(t)})^\top (\bs S_j^{(t)})^{-1} \bs b_{ij}^{(t)}
\Bigr).
\]
Taking logarithms on both sides gives
\[
\log\det \bs S_j^{(t+1)}(i)
=
\log\det \bs S_j^{(t)}
+
\log\!\Bigl(
\gamma_{ij}^2+\lambda
-
(\bs b_{ij}^{(t)})^\top (\bs S_j^{(t)})^{-1} \bs b_{ij}^{(t)}
\Bigr).
\]
Finally, since
\[
u_j^{(t)}=\log\det \bs S_j^{(t)},
\qquad
u_j^{(t+1)}(i)=\log\det \bs S_j^{(t+1)}(i),
\]
we obtain
\[
u_j^{(t+1)}(i)-u_j^{(t)}
=
\log\!\Bigl(
\tilde\gamma_{ij}^2+\lambda
-
(\bs b_{ij}^{(t)})^\top (\bs S_j^{(t)})^{-1} \bs b_{ij}^{(t)}
\Bigr),
\]
which proves the claim.

Since $\bs S_j^{(t)}\succ 0$, the Schur complement formula gives
\begin{equation}
\det \bs S_j^{(t+1)}(i)
=
\det \bs S_j^{(t)}
\Bigl(
\gamma_{ij}^2+\lambda
-
(\bs b_{ij}^{(t)})^\top (\bs S_j^{(t)})^{-1} \bs b_{ij}^{(t)}
\Bigr).
\end{equation}
Taking logarithms on both sides yields the claim.
\end{proof}

\section{Proof of \cref{thm:exactcriterion}}
\exactcriterion*

\begin{proof}
By \cref{thm:exact-one-step},
\[
u_j^{(t+1)}(i)-u_j^{(t)}
=
\log\!\bigl(\gamma_{ij}^2+\lambda-\eta_{ij}^{(t)}\bigr).
\]
Since $\log(\cdot)$ is strictly increasing, this quantity is negative if and only if
\[
\gamma_{ij}^2+\lambda-\eta_{ij}^{(t)} < 1,
\]
which is equivalent to
\[
\eta_{ij}^{(t)} > \gamma_{ij}^2+\lambda-1.
\qedhere
\]
\end{proof}

\section{Proof of \cref{prop:fosd-mono}}
\fosdm*
\begin{proof}
Let
\[
X \;\triangleq\; s_{ij}^{(t)},
\qquad
Y \;\triangleq\; s_{i'j}^{(t)},
\]
viewed as positive random variables conditional on the current evidence $\mathcal D_j^{(t)}$. By assumption,
\[
X \preceq_{\mathrm{FOSD}} Y,
\]
meaning that their conditional distribution functions satisfy
\[
F_X(x)
\;\triangleq\;
\mathbb P\!\left(X\le x \,\middle|\, \mathcal D_j^{(t)}\right)
\;\ge\;
\mathbb P\!\left(Y\le x \,\middle|\, \mathcal D_j^{(t)}\right)
\;\triangleq\;
F_Y(x)
\qquad
\forall x>0.
\]
Equivalently, for every threshold $x>0$, $X$ is conditionally more likely than $Y$ to take smaller values.

Now consider the function
\[
g(s)=-\log s,
\qquad s>0.
\]
Since $\log s$ is strictly increasing on $(0,\infty)$, $g$ is strictly decreasing on $(0,\infty)$. Because $X$ and $Y$ are positive almost surely, $g(X)$ and $g(Y)$ are well defined.

A standard property of first-order stochastic dominance states that if $X \preceq_{\mathrm{FOSD}} Y$, then for every decreasing measurable function $g$ for which the expectations exist,
\[
\mathbb E\!\left[g(X)\middle| \mathcal D_j^{(t)}\right]
\;\ge\;
\mathbb E\!\left[g(Y)\middle| \mathcal D_j^{(t)}\right].
\]

Applying this with $g(s)=-\log s$, we obtain
\[
\mathbb E\!\left[-\log s_{ij}^{(t)} \middle| \mathcal D_j^{(t)}\right]
\;\ge\;
\mathbb E\!\left[-\log s_{i'j}^{(t)} \middle| \mathcal D_j^{(t)}\right].
\]
By definition of the exact BOED utility,
\[
\Delta_j(i)
=
\mathbb E\!\left[-\log s_{ij}^{(t)} \middle| \mathcal D_j^{(t)}\right],
\qquad
\Delta_j(i')
=
\mathbb E\!\left[-\log s_{i'j}^{(t)} \middle| \mathcal D_j^{(t)}\right].
\]
Therefore,
\[
\Delta_j(i)\ge \Delta_j(i'),
\]
which proves the claim.
\end{proof}

\section{Proof of \cref{prop:fosd-prop}}
\fosdprob*
\begin{proof}
Recall from the one-step update formula that for any candidate view $i$,
\[
u_j^{(t+1)}(i)-u_j^{(t)}=\log s_{ij}^{(t)},
\]
where the Schur complement satisfies $s_{ij}^{(t)}>0$ almost surely, since it is the Schur complement of the positive definite matrix $\bs S_j^{(t+1)}(i)$ with respect to the positive definite principal block $\bs S_j^{(t)}$.

Because the logarithm is strictly increasing on $(0,\infty)$, we have
\[
u_j^{(t+1)}(i)<u_j^{(t)}
\quad\Longleftrightarrow\quad
u_j^{(t+1)}(i)-u_j^{(t)}<0
\quad\Longleftrightarrow\quad
\log s_{ij}^{(t)}<0
\quad\Longleftrightarrow\quad
s_{ij}^{(t)}<1.
\]
Therefore, conditional on the current evidence $\mathcal D_j^{(t)}$,
\[
\mathbb P\!\left(u_j^{(t+1)}(i)<u_j^{(t)} \,\middle|\, \mathcal D_j^{(t)}\right)
=
\mathbb P\!\left(s_{ij}^{(t)}<1 \,\middle|\, \mathcal D_j^{(t)}\right).
\]
Likewise,
\[
\mathbb P\!\left(u_j^{(t+1)}(i')<u_j^{(t)} \,\middle|\, \mathcal D_j^{(t)}\right)
=
\mathbb P\!\left(s_{i'j}^{(t)}<1 \,\middle|\, \mathcal D_j^{(t)}\right).
\]

Now let
\[
F_i(x)
\;\triangleq\;
\mathbb P\!\left(s_{ij}^{(t)}\le x \,\middle|\, \mathcal D_j^{(t)}\right),
\qquad
F_{i'}(x)
\;\triangleq\;
\mathbb P\!\left(s_{i'j}^{(t)}\le x \,\middle|\, \mathcal D_j^{(t)}\right)
\]
denote the conditional distribution functions of the two Schur complements. By the assumed first-order stochastic dominance,
\[
F_i(x)\ge F_{i'}(x)
\qquad \forall x>0.
\]
In particular, evaluating this inequality at $x=1$ gives
\[
\mathbb P\!\left(s_{ij}^{(t)}\le 1 \,\middle|\, \mathcal D_j^{(t)}\right)
\;\ge\;
\mathbb P\!\left(s_{i'j}^{(t)}\le 1 \,\middle|\, \mathcal D_j^{(t)}\right).
\]

Combining this with the equivalence between uncertainty reduction and the event $\{s<1\}$, we conclude that
\[
\mathbb P\!\left(u_j^{(t+1)}(i)<u_j^{(t)} \,\middle|\, \mathcal D_j^{(t)}\right)
\;\ge\;
\mathbb P\!\left(u_j^{(t+1)}(i')<u_j^{(t)} \,\middle|\, \mathcal D_j^{(t)}\right),
\]
which proves the claim.
\end{proof}

\section{Redundancy Score Intuition}
Recall that
\[
\bs b_{ij}^{(t)} = (\bs Z_j^{(t)})^\top \bs z_{ij},
\qquad
\eta_{ij}^{(t)} = (\bs b_{ij}^{(t)})^\top (\bs S_j^{(t)})^{-1} \bs b_{ij}^{(t)},
\]
where $\bs z_{ij}=\gamma_{ij}\bs r_{ij}$ is the candidate weighted caption embedding and
\[
\bs S_j^{(t)} = (\bs Z_j^{(t)})^\top \bs Z_j^{(t)} + \lambda \bs I_t.
\]
Substituting $\bs b_{ij}^{(t)}=(\bs Z_j^{(t)})^\top \bs z_{ij}$ gives
\[
\eta_{ij}^{(t)}
=
\bs z_{ij}^\top
\bs Z_j^{(t)}
(\bs S_j^{(t)})^{-1}
(\bs Z_j^{(t)})^\top
\bs z_{ij}.
\]
Thus, if we define
\[
\bs P_j^{(t)}
\;\triangleq\;
\bs Z_j^{(t)}(\bs S_j^{(t)})^{-1}(\bs Z_j^{(t)})^\top \in \mathbb{R}^{d\times d},
\]
then
\[
\eta_{ij}^{(t)} = \bs z_{ij}^\top P_j^{(t)} \bs z_{ij}.
\]
The matrix $\bs P_j^{(t)}$ acts as a \emph{regularized projection-like operator} onto the span of the previously selected weighted embeddings. Consequently, the quadratic form $\eta_{ij}^{(t)}$ measures how much of the candidate vector $\bs z_{ij}$ lies in directions that are already explained by the current evidence. In particular, $\eta_{ij}^{(t)}$ is large when $\bs z_{ij}$ is well aligned with the span of the columns of $Z_j^{(t)}$, and small when $\bs z_{ij}$ points in a semantically new direction.

The term $\gamma_{ij}^2$ has an equally direct interpretation. Since $\|\bs r_{ij}\|_2=1$,
\[
\|\bs z_{ij}\|_2^2
=
\|\gamma_{ij}\bs r_{ij}\|_2^2
=\gamma_{ij}^2.
\]
Hence $\gamma_{ij}^2$ is simply the squared norm of the candidate weighted embedding. Larger $\gamma_{ij}$ corresponds to greater internal incoherence, so more incoherent candidate captions contribute larger total weighted magnitude.

Combining these two facts, the Schur complement term
\[
\gamma_{ij}^2 + \lambda - \eta_{ij}^{(t)}
\]
can be interpreted as a \emph{regularized residual energy}: it is the total weighted magnitude of the candidate embedding, plus regularization, minus the portion already explained by the currently selected views. Therefore, uncertainty decreases precisely when the candidate is sufficiently explained by the existing evidence relative to its own incoherence.

\paragraph{Idealized example.}
This interpretation becomes exact in the idealized case where $\lambda=0$ and the columns of $\bs Z_j^{(t)}$ are orthonormal. Then
\[
\bs S_j^{(t)} = (\bs Z_j^{(t)})^\top \bs Z_j^{(t)} = \bs I_t,
\]
so
\[
\bs P_j^{(t)}
=
\bs Z_j^{(t)}(\bs Z_j^{(t)})^\top,
\]
which is the orthogonal projector onto $\mathrm{span}(\bs Z_j^{(t)})$. In this case,
\[
\eta_{ij}^{(t)}
=
\bs z_{ij}^\top P_j^{(t)} \bs z_{ij}
=
\bigl\|\bs P_j^{(t)}\bs z_{ij}\bigr\|_2^2,
\]
so $\eta_{ij}^{(t)}$ is exactly the squared norm of the component of $\bs z_{ij}$ that lies in the span of the already selected weighted embeddings. If $\bs z_{ij}$ lies entirely in that span, then $\eta_{ij}^{(t)}=\|\bs z_{ij}\|_2^2=\gamma_{ij}^2$; if it is orthogonal to that span, then $\eta_{ij}^{(t)}=0$. The general regularized case preserves the same intuition, but with a softened projection induced by $(\bs S_j^{(t)})^{-1}$.

\begin{proposition}[Lower bound from approximate stochastic dominance]
\label{prop:more-bound-1}
Let
\[
s_{ij}^{(t)}=\tilde\gamma_{ij}^2+\lambda-\eta_{ij}^{(t)}
\]
be the random Schur complement for candidate view $i$, conditional on the current evidence $\mathcal D_j^{(t)}$. Suppose there exists a reference random variable $S^\star$ and a constant $\varepsilon\in[0,1]$ such that
\[
\mathbb{P}\!\left(s_{ij}^{(t)} \le x \,\middle|\, \mathcal D_j^{(t)}\right)
\;\ge\;
\mathbb{P}(S^\star \le x)-\varepsilon
\qquad
\forall x>0.
\]
Then
\[
\mathbb{P}\!\left(u_j^{(t+1)}(i)<u_j^{(t)} \,\middle|\, \mathcal D_j^{(t)}\right)
\;\ge\;
\mathbb{P}(S^\star<1)-\varepsilon.
\]
\end{proposition}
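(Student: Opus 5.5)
The plan is to reduce the event of uncertainty reduction to an event about the Schur score, exactly as in the proof of \cref{prop:fosd-prop}, and then pass the approximate dominance hypothesis through that equivalence. By \cref{thm:exact-one-step}, $u_j^{(t+1)}(i)-u_j^{(t)}=\log s_{ij}^{(t)}$, where $s_{ij}^{(t)}>0$ almost surely because it is the Schur complement of a positive definite matrix. Since $\log$ is strictly increasing on $(0,\infty)$, we get
\[
\mathbb{P}\!\left(u_j^{(t+1)}(i)<u_j^{(t)} \,\middle|\, \mathcal D_j^{(t)}\right)=\mathbb{P}\!\left(s_{ij}^{(t)}<1 \,\middle|\, \mathcal D_j^{(t)}\right).
\]
This is the same reduction as in \cref{thm:exactcriterion}.

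The one subtlety, and the step I expect to require care, is the mismatch between the strict event $\{s_{ij}^{(t)}<1\}$ and the hypothesis, which only controls the non-strict probabilities $\mathbb{P}(s_{ij}^{(t)}\le x)$. Plugging in $x=1$ directly would bound $\mathbb{P}(s\le 1)$, which is the wrong event. Instead I would take a sequence $x_k\uparrow 1$ with $x_k\in(0,1)$. The events $\{s_{ij}^{(t)}\le x_k\}$ increase to $\{s_{ij}^{(t)}<1\}$, and $\{S^\star\le x_k\}$ increase to $\{S^\star<1\}$. Continuity from below of (conditional) probability measures then gives
\[
\mathbb{P}\!\left(s_{ij}^{(t)}<1 \,\middle|\, \mathcal D_j^{(t)}\right)=\lim_{k\to\infty}\mathbb{P}\!\left(s_{ij}^{(t)}\le x_k \,\middle|\, \mathcal D_j^{(t)}\right)\ge \lim_{k\to\infty}\mathbb{P}(S^\star\le x_k)-\varepsilon=\mathbb{P}(S^\star<1)-\varepsilon.
\]
Here the inequality holds termwise for each $k$ by the hypothesis, since $x_k>0$, and it is preserved in the limit.

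Combining this with the first display yields the claim. If one also needs $S^\star$ to be supported on $(0,\infty)$, this does not matter: the argument only uses the hypothesis at points $x_k\in(0,1)$, and $\mathbb{P}(S^\star<1)$ is simply whatever it is.
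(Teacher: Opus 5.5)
Your proof is correct and follows the same route as the paper. Both reduce $\{u_j^{(t+1)}(i)<u_j^{(t)}\}$ to $\{s_{ij}^{(t)}<1\}$ via \cref{thm:exact-one-step} and then invoke the CDF hypothesis; the paper simply evaluates the hypothesis at $x=1$, which strictly only controls $\mathbb{P}(s_{ij}^{(t)}\le 1\mid\mathcal D_j^{(t)})$ against $\mathbb{P}(S^\star\le 1)$, whereas your continuity-from-below argument along $x_k\uparrow 1$ correctly yields the strict-inequality events and so closes a small gap in the paper's write-up.
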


\begin{proof}
Since
\[
u_j^{(t+1)}(i)-u_j^{(t)}=\log s_{ij}^{(t)},
\]
we have
\[
u_j^{(t+1)}(i)<u_j^{(t)}
\quad\Longleftrightarrow\quad
s_{ij}^{(t)}<1.
\]
Therefore
\[
\mathbb{P}\!\left(u_j^{(t+1)}(i)<u_j^{(t)} \,\middle|\, \mathcal D_j^{(t)}\right)
=
\mathbb{P}\!\left(s_{ij}^{(t)}<1 \,\middle|\, \mathcal D_j^{(t)}\right).
\]
Applying the assumed CDF bound at $x=1$ yields
\[
\mathbb{P}\!\left(s_{ij}^{(t)}<1 \,\middle|\, \mathcal D_j^{(t)}\right)
\;\ge\;
\mathbb{P}(S^\star<1)-\varepsilon,
\]
which proves the claim.
\end{proof}

\begin{proposition}[Lower bound from the first two moments]
\label{prop:more-bound-2}
Let
\[
\mu_{ij}=\mathbb E[s_{ij}^{(t)}\mid \mathcal D_j^{(t)}],
\qquad
\sigma_{ij}^2=\mathrm{Var}(s_{ij}^{(t)}\mid \mathcal D_j^{(t)}).
\]
If $\mu_{ij}<1$, then
\[
\mathbb P\!\left(u_j^{(t+1)}(i)<u_j^{(t)} \,\middle|\, \mathcal D_j^{(t)}\right)
\;\ge\;
\frac{(1-\mu_{ij})^2}{\sigma_{ij}^2+(1-\mu_{ij})^2}.
\]
\end{proposition}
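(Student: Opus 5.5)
The plan is to reduce the event of uncertainty reduction to a one-sided tail event for the scalar random variable $s_{ij}^{(t)}$, and then apply the one-sided Chebyshev (Cantelli) inequality. By \cref{thm:exact-one-step}, $u_j^{(t+1)}(i)-u_j^{(t)}=\log s_{ij}^{(t)}$. Since $\log$ is strictly increasing on $(0,\infty)$ and $s_{ij}^{(t)}>0$ almost surely (it is the Schur complement of a positive definite block), we get, exactly as in the proof of \cref{prop:fosd-prop},
\[
\mathbb P\!\left(u_j^{(t+1)}(i)<u_j^{(t)} \,\middle|\, \mathcal D_j^{(t)}\right)=\mathbb P\!\left(s_{ij}^{(t)}<1 \,\middle|\, \mathcal D_j^{(t)}\right).
\]
It therefore suffices to lower bound $\mathbb P(s_{ij}^{(t)}<1\mid\mathcal D_j^{(t)})$, or equivalently to upper bound the complementary probability $\mathbb P(s_{ij}^{(t)}\ge 1\mid\mathcal D_j^{(t)})$.

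Set $a\triangleq 1-\mu_{ij}>0$, which is where the hypothesis $\mu_{ij}<1$ enters, and write $W\triangleq s_{ij}^{(t)}-\mu_{ij}$. Under the conditional law, $W$ has mean zero and variance $\sigma_{ij}^2$, and the event $\{s_{ij}^{(t)}\ge 1\}$ equals $\{W\ge a\}$. Cantelli's inequality gives $\mathbb P(W\ge a)\le \sigma_{ij}^2/(\sigma_{ij}^2+a^2)$. To keep the argument self-contained I would derive it directly. For any $c\ge 0$, Markov's inequality applied to $(W+c)^2$ gives
\[
\mathbb P(W\ge a)\le \mathbb P\big((W+c)^2\ge (a+c)^2\big)\le \frac{\sigma_{ij}^2+c^2}{(a+c)^2}.
\]
Minimizing over $c$ at $c=\sigma_{ij}^2/a$ yields $\sigma_{ij}^2/(\sigma_{ij}^2+a^2)$.

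Taking complements then gives
\[
\mathbb P(s_{ij}^{(t)}<1\mid\mathcal D_j^{(t)})\ge 1-\frac{\sigma_{ij}^2}{\sigma_{ij}^2+(1-\mu_{ij})^2}=\frac{(1-\mu_{ij})^2}{\sigma_{ij}^2+(1-\mu_{ij})^2},
\]
which is the claimed bound.

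The main subtlety is bookkeeping rather than difficulty, and it has two parts. First, the strict versus non-strict inequality must be handled correctly: the complement of $\{s<1\}$ is $\{s\ge 1\}$, and Cantelli bounds exactly this closed tail, so the bound lands on the strict event as required. Second, the finiteness of $\sigma_{ij}^2$ should be assumed implicitly; if $\sigma_{ij}^2=\infty$ the bound is trivially $0$. The degenerate case $\sigma_{ij}^2=0$ is consistent as well: then $s_{ij}^{(t)}=\mu_{ij}<1$ almost surely and the bound equals $1$.
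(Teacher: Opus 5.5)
Your proposal is correct and follows essentially the same route as the paper. Like the paper, you reduce uncertainty reduction to the event $\{s_{ij}^{(t)}<1\}$ via the one-step log-update, bound the complementary tail $\{s_{ij}^{(t)}-\mu_{ij}\ge 1-\mu_{ij}\}$ with Cantelli's inequality, and take complements; the only difference is that you derive Cantelli yourself from Markov's inequality on $(W+c)^2$ with $c=\sigma_{ij}^2/(1-\mu_{ij})$, whereas the paper simply cites it.
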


\begin{proof}

Since
\[
u_j^{(t+1)}(i)<u_j^{(t)}
\quad\Longleftrightarrow\quad
s_{ij}^{(t)}<1,
\]
it suffices to bound $\mathbb P(s_{ij}^{(t)}<1\mid \mathcal D_j^{(t)})$. By Cantelli's inequality,
\[
\mathbb P\!\left(s_{ij}^{(t)}-\mu_{ij}\ge 1-\mu_{ij}\,\middle|\,\mathcal D_j^{(t)}\right)
\le
\frac{\sigma_{ij}^2}{\sigma_{ij}^2+(1-\mu_{ij})^2}.
\]
Hence
\[
\mathbb P\!\left(s_{ij}^{(t)}<1\,\middle|\,\mathcal D_j^{(t)}\right)
\ge
1-\frac{\sigma_{ij}^2}{\sigma_{ij}^2+(1-\mu_{ij})^2}
=
\frac{(1-\mu_{ij})^2}{\sigma_{ij}^2+(1-\mu_{ij})^2}.
\qedhere
\]
\end{proof}

\section{LLM-based Captions Fusion Details}
\label{app:llm-fusion}
After uncertainty-driven view selection, each object is associated with a small set of captions obtained from multiple views. We fuse these captions 
into a single object description using a constrained LLM-based aggregation step. 
The prompt is designed to exploit the acquisition structure of our pipeline: the first few captions correspond to the initial representative views, 
while later captions come from refinement views that are assumed to be resolve uncertainty. Accordingly, the LLM is instructed to regard later captions 
as more reliable evidence when conflicts arise. If a fact disagrees between initial and refined views, the refined-view version is preferred.
 If disagreement exists only among the initial views, the conflict is not suppressed; instead, the fused caption retains the alternatives 
 using cautious natural-language constructions such as \emph{``white or gray''} or \emph{``possibly wooden or metal.''} 
 This preserves uncertainty rather than collapsing it prematurely into a single claim.

Formally, let $\{c_{j1},\dots,c_{jr_j}\}$ denote the captions for object $o_j$, ordered by acquisition stage, with captions from the 
potential refinement stage placed after the initial ones. The fusion module takes this ordered caption list as input and prompts the LLM to
 generate a structured JSON output containing three fields: a fused caption, the facts used in that caption, and the facts 
 omitted during fusion. The LLM is explicitly constrained to satisfy three requirements: (i) it may only use information that 
 appears in at least one source caption, preventing unsupported hallucinations; (ii) it must privilege facts from refined 
 views when they conflict with facts from initial views; and (iii) it must expose unresolved ambiguity from the initial views rather 
 than silently discarding it. In this way, fusion serves as a semantic consolidation layer over the multi-view captions: consistent 
 facts are merged into a unified object description, refined views correct lower-quality initial evidence, and residual ambiguity is preserved 
 in the language when the evidence remains unresolved. We present the detailed prompt in \cref{app:llm-fusion}.
\cref{lst:caption-fusion-prompt} shows the system prompt used to fuse multiple captions of the same object into a 
single concise description while preserving uncertainty and handling conflicts between initial and refined views. We use \texttt{gpt-4o-mini}
from OpenAI's API for caption fusion, and the prompt is designed to elicit structured JSON output containing the fused caption, t
he facts used in that caption, and any omitted facts. The prompt explicitly instructs the LLM to avoid hallucinating new information, 
to privilege refined views over initial views when conflicts arise, and to surface unresolved ambiguity rather than silently discarding it.

\begin{promptbox}[label={lst:caption-fusion-prompt}]{System prompt for caption fusion}
SYSTEM_PROMPT = """\
You are a scene-description assistant that fuses multiple captions of the same object into one.

Caption ordering:
  - Captions are numbered starting from [1]. The first three ([1]-[3]) are initial views.
  - If more than three captions are provided, captions [4] onwards are refined views with higher quality.
  - When facts conflict, prefer the version stated in refined views ([4]+) over initial views ([1]-[3]).

Handling conflicts:
  - For facts that conflict across initial views only, keep all alternatives using natural phrasing
    (e.g. "white or gray", "appears round or oval", "possibly wooden or metal").
  - For facts that conflict between initial and refined views, prefer the refined view's version and
    drop the initial view's version.
  - Never silently drop a conflicting fact from the initial views --- surface it as an alternative.

Other rules:
  - Never introduce new information not present in any source caption.
  - The fused caption should be concise (one to three sentences).

Respond ONLY with valid JSON matching this schema:
{
  "fused_caption": "<one to three sentences>",
  "used_facts": ["<fact 1>", ...],
  "omitted_facts": ["<fact that was dropped>", ...]
}
"""
\end{promptbox}

\section{Algorithm Details}
\label{app:alg}
\begin{algorithm}[H]
\caption{UQ-DAAAM: Uncertainty Quantification and Active Object Refinement}
\label{alg:uq_daaam}
\small
\begin{algorithmic}[1]
\REQUIRE RGB-D frame stream $\mathcal{F}=\{f_1,\dots,f_n\}$, object fragments $\mathcal{O}=\{o_1,\dots,o_m\}$, visibility indicators $v_{ij}$, 
DAAAM heuristic scores $q_{ij}$, refinement budget $B$, uncertainty threshold $\tau$
\ENSURE Refined object memories $\{\mathcal{V}_j, u_j, \hat c_j\}_{j=1}^m$

\STATE Run the standard DAAAM pipeline to construct the initial memory and obtain, for each object $o_j$, 
an initial set of selected views $\mathcal{V}_j$, where $|\mathcal{V}_j|=r$, and captions $\{c_{\ell j}\}_{\ell=1}^{r}$

\FOR{each object $o_j \in \mathcal{O}$}
    \STATE Compute normalized caption embeddings
    $
    \bs r_{\ell j} \gets \frac{\phi(c_{\ell j})}{\|\phi(c_{\ell j})\|_2}, \qquad \ell=1,\dots,r
    $
    \STATE Compute incoherence weights
    $
    \gamma_{\ell j} \gets \exp\!\bigl(\alpha(1-p_{\ell j})\bigr)
    $
    \STATE Form weighted embedding matrix
    $
    \bs Z_j \gets [\,\gamma_{1j}\bs r_{1j},\dots,\gamma_{rj}\bs r_{rj}\,]
    $
    \STATE Form semantic scatter and uncertainty
    $
    \bs S_j \gets \bs Z_j^\top \bs Z_j + \lambda \bs I, \qquad
    u_j \gets \log\det \bs S_j
    $
\ENDFOR

\STATE Identify uncertain objects
$
\mathcal{U}_{\tau}
\;\gets\;
\{\, j\in[m] : u_j > \tau \,\}
$

\FOR{each object $o_j$ with $j\in\mathcal{U}_{\tau}$}
    \STATE Initialize object-level refinement budget $b_j \gets B$
    \WHILE{$b_j > 0$}
        \STATE Form feasible candidate-view set
        $
        \mathcal{C}_j
        \gets
        \{\, i \in [n]\setminus \mathcal{V}_j : v_{ij}=1 \,\}
        $
        \IF{$\mathcal{C}_j=\emptyset$}
            \STATE \textbf{break}
        \ENDIF

        \STATE Select the next refinement view using the proxy rule
        $
        i^\star \in \arg\max_{i\in\mathcal{C}_j} q_{ij}
        $

        \STATE Query the VLM on object $o_j$ in view $f_{i^\star}$ to obtain a new caption $c$
        \STATE Update the semantic scatter and uncertainty online via \cref{eq:onestepupdate}
        \STATE Append view: $\mathcal{V}_j \gets \mathcal{V}_j \cup \{i^\star\}$

        \STATE Decrease object-level budget: $b_j \gets b_j - 1$
    \ENDWHILE
\ENDFOR

\FOR{each object $o_j \in \mathcal{O}$}
    \STATE Fuse the multi-view captions associated with $\mathcal{V}_j$ into a final caption $\hat c_j$
\ENDFOR

\RETURN $\{\mathcal{V}_j, u_j, \hat c_j\}_{j=1}^m$
\end{algorithmic}
\end{algorithm}

\section{Uncertainty Quality Experiments}
\label{app:uq-qual}
A central claim of the paper is that the proposed uncertainty score is meaningful. 
We isolate uncertainty measurement from view selection by evaluating the quality of the uncertainty scores themselves, 
without using them to select views. 
In our experiments, we use Llava-v1.5-13b \cite{Liu23neurips-llava}, with CLIP-ViT-L-336px \cite{Radford21icml-clip} as its vision backbone. 
Following \cite{Kuhn23arxiv-semanticUnc, Lau25icmlws-uqMllm}, 
we evaluate our uncertainty metric's performance on three distinct visual question-answering (VQA) benchmark datasets: 
VQAv2 \cite{Goyal17cvpr-vqaV2}, OKVQA \cite{Marino19cvpr-okvqa}, and AdVQA \cite{Li21iccv-advVQA}. 
We compare our metric against a variety of uncertainty-quantification baselines:
\begin{itemize}
  \item \textbf{Neighborhood Consistency} \cite{Khan24cvpr-consistency}: examines the consistency of the model responses over 
rephrased questions generated by a small proxy VQG model.
  \item  \textbf{LN-Entropy} \cite{Malinin20arxiv-uncertaintyAR}: normalizes the joint 
log-probability of each sequence by dividing it by the sequence length sampled via multinomial sampling. 
  \item \textbf{Semantic Entropy} \cite{Kuhn23arxiv-semanticUnc}: models the uncertainty over different meanings by clustering the
generated sequences by \cite{He20arxiv-deberta} and measuring the clusters' entropies. 
\item \textbf{EigenScore} \cite{Chen24arxiv-inside}: computes the log determinant of
the covariance matrix of response embeddings via SVD. 
\end{itemize}

Let $u^\star_{\text{correct}}$ and $u^\star_{\text{incorrect}}$ denote the uncertainty scores of correct and incorrect responses, respectively.
Our target quantity is the pairwise ranking probability
$\mathbb{P}\!\left(u^\star_{\text{correct}} < u^\star_{\text{incorrect}}\right),$
\ie, how often a randomly chosen correct response receives a lower uncertainty score than a randomly chosen incorrect response.
Following \cite{Lau25icmlws-uqMllm, Kuhn23arxiv-semanticUnc}, we summarize this ranking quality with the \textbf{AUROC} of a binary classifier that
uses $u^\star$ to predict correctness (lower $u^\star$ indicates higher confidence / more likely correct).

We test whether the (\textit{normalized}) uncertainty score is calibrated in the sense that
$\mathbb{P}(\text{correct}\mid u^\star)\;\approx\;1-u^\star$.  
We follow the standard binning-based calibration protocol of \cite{Guo17icml-failureDNN}: we sort and bin instances by $u^\star$ and compute per-bin average uncertainty and empirical accuracy. We report: \textbf{Calibration correlation}, the correlation between binned mean uncertainty and binned empirical error. 
We provide both \textbf{Pearson} and \textbf{Spearman} correlations. \textbf{Expected Calibration Error (ECE)}, 
which measures the (weighted) average absolute gap between binned predicted confidence and binned accuracy.

\begin{table*}[t]
\centering
\resizebox{\linewidth}{!}{
\begin{tabular}{llccccccccccccccc}
\hline
& & \multicolumn{3}{c}{\textbf{AUROC} ($\uparrow$)}
& \multicolumn{3}{c}{\textbf{Pearson Corr.} ($\uparrow$)}
& \multicolumn{3}{c}{\textbf{Spearman Corr.} ($\uparrow$)}
& \multicolumn{3}{c}{\textbf{ECE} ($\downarrow$)}
& \multicolumn{3}{c}{\textbf{AURAC} ($\uparrow$)} \\
\cline{3-5}\cline{6-8}\cline{9-11}\cline{12-14}\cline{15-17}
& \textbf{Method }
& VQA & OK & AdV
& VQA & OK & AdV
& VQA & OK & AdV
& VQA & OK & AdV
& VQA & OK & AdV \\
\hline
\multirow{4}{*}{\rotatebox[origin=c]{90}{\textbf{Baselines}}}
& Neighborhood      & 0.761 & 0.512 & 0.660 & 0.789 & 0.577 & 0.543 & 0.712 & 0.580 & 0.703 & 0.331 & 0.502 & 0.353 & 0.886 & 0.629 & 0.683 \\
& LN-Entropy        & 0.763 & 0.698 & 0.639 & 0.563 & 0.856 & 0.913 & 0.566 & 0.699 & 0.900 & 0.055 & 0.057 & 0.072 & 0.899 & 0.741 & 0.705 \\
& Semantic Entropy  & 0.840 & 0.712 & 0.757 & 0.909 & 0.444 & 0.799 & 0.897 & 0.532 & 0.781 & 0.056 & 0.189 & 0.182 & 0.902 & 0.734 & 0.742 \\
& EigenScore        & 0.850 & 0.741 & 0.766 & 0.922 & 0.790 & 0.872 & 0.883 & 0.842 & 0.811 & 0.056 & 0.190 & 0.211 & 0.913 & 0.753 & 0.753 \\
\hline
& Ours     & 0.853 & 0.746 & 0.763 & 0.902 & 0.895 & 0.941 & 0.852 & 0.901 & 0.875 &0.041  & 0.041 & 0.043 & 0.862 & 0.743 & 0.710 \\
\hline
\end{tabular}
}
\caption{Uncertainty evaluation on VQAv2, OKVQA, and AdVQA.
We report AUROC, calibration correlations (Pearson, Spearman), Expected Calibration Error (ECE; lower is better), and AURAC (higher is better).}
\label{tab:uq_all_metrics}
\end{table*}

As shown in \cref{tab:uq_all_metrics}, our metric consistently outperforms all baselines on both discrimination (AUROC/TPR/FPR)
and calibration (correlations/ECE), including multimodal-targeted methods such as UMPIRE and Neighborhood Consistency.
On the more challenging datasets (OKVQA and AdVQA), where model outputs exhibit higher diversity and incoherence,
our metric remains robust and provides substantially better correctness prediction.

We also evaluate whether our metric can improve \emph{selective answering}---\ie, abstaining on uncertain queries and answering only when the model is confident.

Let $u^\star_q$ be the uncertainty score for query $q$ (lower $u^\star$ indicates higher confidence).
For any threshold $\theta$, the selective policy answers iff $u^\star_q \le \theta$ and abstains otherwise.
Varying $\theta$ induces a trade-off between \emph{coverage} (fraction answered) and \emph{accuracy} on the answered subset.
Following \cite{Farquhar24nature-semanticEntropy, Lau25icmlws-uqMllm, Hullermeier21ml-aleatoric}, we plot the {Rejection--Accuracy curve}:
we sort queries by $u^\star$ and progressively reject the most uncertain fraction; at each rejection rate $r\in[0,1]$ (equivalently, coverage $1-r$),
we compute the accuracy on the remaining answered queries.
We summarize the overall benefit of uncertainty-based abstention with the \textbf{Area Under the Rejection--Accuracy Curve (AURAC)},
which aggregates performance across all thresholds (higher is better).

As shown in \cref{tab:uq_all_metrics}, our metric consistently achieves the highest AURAC across datasets.
This indicates that fusing dual-source uncertainty yields a more reliable abstention signal: the model preferentially rejects queries that are likely to be incorrect,
thereby improving accuracy on the answered subset. In practical VQA settings, this supports user-facing behaviors where the system abstains when uncertain rather than
returning low-confidence answers.
\section{Experiments Details}
\paragraph{Dataset and preprocessing.}
We evaluate on the CODa outdoor driving dataset~\cite{Zhang24tro-coda}, which provides 30,Hz stereo images, LiDAR-projected depth maps, and globally consistent poses. 
All experiments use the front camera (\texttt{cam0}) images, center-cropped and resized to $480 \times 640$ pixels. Depth is sourced from the \texttt{3d\_raw\_estimated} 
LiDAR projection, filtered to the range $[0.05, 20.0]$m. Frames are processed at 10Hz (the sensor publishing rate).

\paragraph{Scene reconstruction.}
We use Khronos~\cite{Schmid24rss-khronos} for real-time 3D scene graph construction, with a TSDF voxel size of 0.2,m and truncation distance of 0.6,m. 
Object fragments are extracted with a minimum volume of 0.005m$^3$ and maximum of 50m$^3$, requiring at least 8 observations and a minimum allocation confidence of 0.5. 
Traversability places are computed with widths in $[0.4, 2.0]$m, with label projection to ground planes enabled.

\paragraph{Segmentation and tracking.}
We use FastSAM~\cite{Zhao23Arxiv-FastSam} compiled to a TensorRT engine at $480 \times 640$ resolution for open-vocabulary instance segmentation, 
with a minimum mask area of 400 pixels. Object tracking uses ByteTrack~\cite{Zhang22eccv-byteTrack} with appearance re-identification via a CLIP-based ReID model (TensorRT), 
a new-track threshold of 0.6, and a track buffer of 30 frames. A track must accumulate at least 8 observations before it qualifies for semantic annotation.

\paragraph{Semantic annotation.}
We use DAM-3B~\cite{Lian25iccv-DAM} as the vision-language model for object captioning, operating in focal-prompt mode with batch sizes of 32--128 masks. 
Frame selection uses Perception Encoder (PE-Core-L14-336) CLIP features to select the most visually informative frame per track. Per-frame CLIP features 
(ViT-B/16) are computed every 5 frames for temporal scene understanding. Semantic annotation queries are triggered every 120 frames.

\paragraph{Cross-view UQ.}
We use the cross-view uncertainty quantification mode, where the assignment BLP~\cite{Gorlo26cvpr-DAAAM} assigns $r{=}3$ representative 
frames per tracked fragment (with a minimum of 2 views required). The BLP is solved with GLPK\_MI (15s timeout), 
with a greedy set-cover fallback. Each assigned view produces an independent VLM caption; the log-determinant Gram uncertainty is computed from the $r$ 
L2-normalised caption embeddings with $\lambda = 10^{-8}$.
 We additionally incorporate the UMPIRE~\cite{Lau25icmlws-uqMllm} coherence term with weight $\alpha = 1.0$, scaling each embedding by its incoherence 
 weight $\gamma_i = \alpha (1 - \exp(\ell\ell_i))$ where $\ell\ell_i$ is the per-sequence log-likelihood.

\paragraph{Active view selection.}
Objects whose initial UQ score exceeds the threshold $\tau = -0.25$ are selected for refinement. Each flagged object receives up 
to $b{=}2$ additional views, chosen to maximally reduce the log-determinant via the rank-1 Schur complement update: 
$\log\det(\mathbf{G}^{(t+1)}) = \log\det(\mathbf{G}^{(t)}) + \log(c - \mathbf{b}^\top (\mathbf{G}^{(t)})^{-1} \mathbf{b})$, 
where $\mathbf{b}$ and $c$ are the cross-terms and self-similarity of the new view embedding. The Gram inverse is updated via the Sherman--Morrison formula, 
avoiding recomputation. Refinement runs asynchronously during pipeline shutdown with a timeout of 300s.

\paragraph{Downstream evaluation.}
For the NaVQA benchmark~\cite{Anwar25icra-remembr}, we use GPT-5-mini for question answering over the constructed scene graph memory, 
with sentence embeddings computed by Sentence-T5-XL~\cite{Ni22aclf-sentenceT5}. We evaluate on 7 CODa sequences (0, 3, 4, 6, 16, 21, 22) 
totalling 98,866 frames. All experiments are conducted on a single NVIDIA RTX 4090 GPU with 24GB VRAM, using 1 assignment worker and 1 grounding worker.
\begin{table}[t]
\centering
\begin{tabular}{lccc}
\toprule
Method & Accuracy $\uparrow$ & Position [m] $\downarrow$ & Temporal [min] $\downarrow$ \\
\midrule
{DAAAM-RR}  & $0.716 \pm 0.014$ & $41.18 \pm 2.31$ & $1.769 \pm 0.089$ \\
{DAAAM-QR}  & $0.738 \pm 0.011$ & $40.14 \pm 1.98$ & $1.734 \pm 0.072$ \\
{DAAAM-NC}~\cite{Khan24cvpr-consistency}  & $0.713 \pm 0.016$ & $41.62 \pm 2.54$ & $1.783 \pm 0.094$ \\
{DAAAM-LN}~\cite{Malinin20arxiv-uncertaintyAR}  & $0.714 \pm 0.015$ & $41.55 \pm 2.48$ & $1.776 \pm 0.091$ \\
{DAAAM-Ent}~\cite{Kuhn23arxiv-semanticUnc}  & $0.733 \pm 0.013$ & $40.33 \pm 2.17$ & $1.718 \pm 0.083$ \\
{DAAAM-ES}~\cite{Chen24arxiv-inside}  & $0.743 \pm 0.012$ & $39.69 \pm 2.02$ & $1.683 \pm 0.078$ \\
\midrule
{UQ-DAAAM} (Ours)  & $\mathbf{0.761} \pm 0.009$ & $\mathbf{37.84} \pm 1.74$ & $\mathbf{1.589} \pm 0.063$ \\
\bottomrule
\end{tabular}
\caption{Mean $\pm$ standard deviation on OC-NaVQA (All split). Results averaged over three seeds.}
\label{tab:results_std}
\end{table}

\section{Memory and Runtime Details}
\label{app:runtime}
We evaluate the overhead introduced by cross-view uncertainty quantification and active view selection on the same CODa 
sequences, using a single NVIDIA RTX 4090 GPU. Across 7 sequences (98,866 frames total), UQ-DAAAM achieves a mean frame 
rate of 13.5 Hz, comfortably exceeding the 10 Hz sensor rate of the CODa dataset and confirming that the UQ extensions 
preserve real-time operation. The cross-view BLP solver, which assigns $r{=}3$ views per tracked fragment, adds a mean 
latency of $131.7 \pm 132.5$ ms per query batch, accounting for only 1.49\% of total frame processing time. 
This is because the BLP runs on CPU via \texttt{cvxpy} and executes infrequently (828 calls across 98,866 frames). 
Each object receives $3.44 \pm 0.68$ cross-view captions on average, from which the log-determinant Gram uncertainty 
is computed via \texttt{numpy.linalg.slogdet} entirely on CPU with negligible overhead. Of the 18,298 objects 
annotated across all sequences, 6,324 (34.6\%) were selected for active view refinement based on their initial 
uncertainty, receiving $1.32 \pm 0.47$ additional views each. Active view selection reduced the mean UQ score from 
$0.56$ to $0.32$, indicating substantially increased caption agreement. The additional VLM calls for 
refinement occur asynchronously in the grounding worker thread during shutdown, with a configurable timeout. 
GPU memory consumption remains within the 24 GB envelope of the RTX 4090, as the UQ and BLP components are CPU-only 
and the VLM model (DAM-3B) is shared between initial annotation and refinement.

\section{Qualitative Results and Visualizations}
In the following qualitative examples, we present the captions and uncertainty scores for both initial and refined views of selected objects.
We show both objects with low uncertainty, which do not require refinement (before and after view selection are the same), and objects with high uncertainty, where the refined views lead to more consistent captions and reduced uncertainty.
In each visualization, we show the raw RGB images, object masks, and zoomed-in views of the masked objects for clarity.
\label{app:qual}
\subsection{Examples of Objects with Low Uncertainty}
\begin{figure}[H]
\centering
\includegraphics[width=0.7\textwidth]{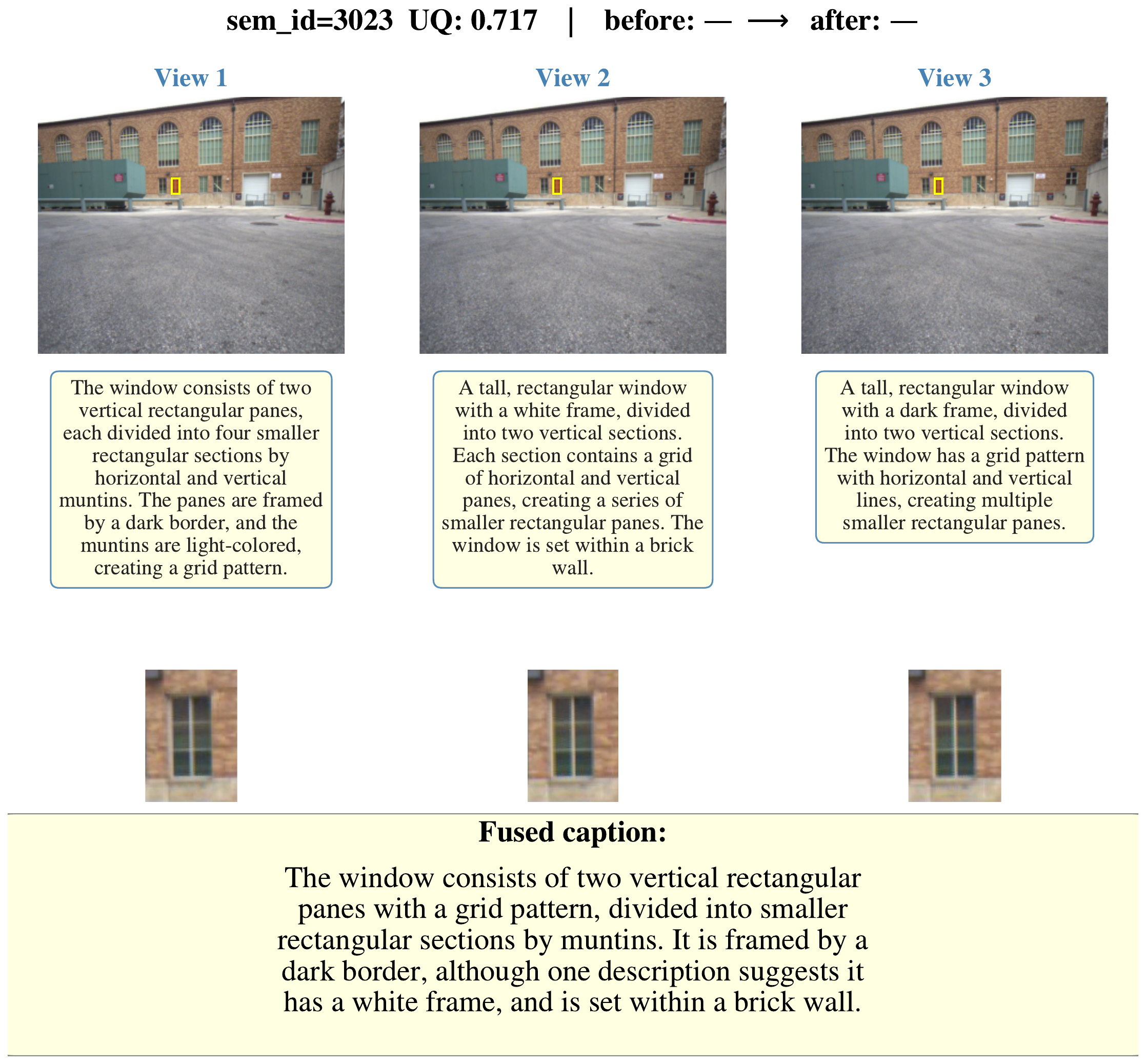}
\end{figure}
\begin{figure}[H]
\centering
\includegraphics[width=0.7\textwidth]{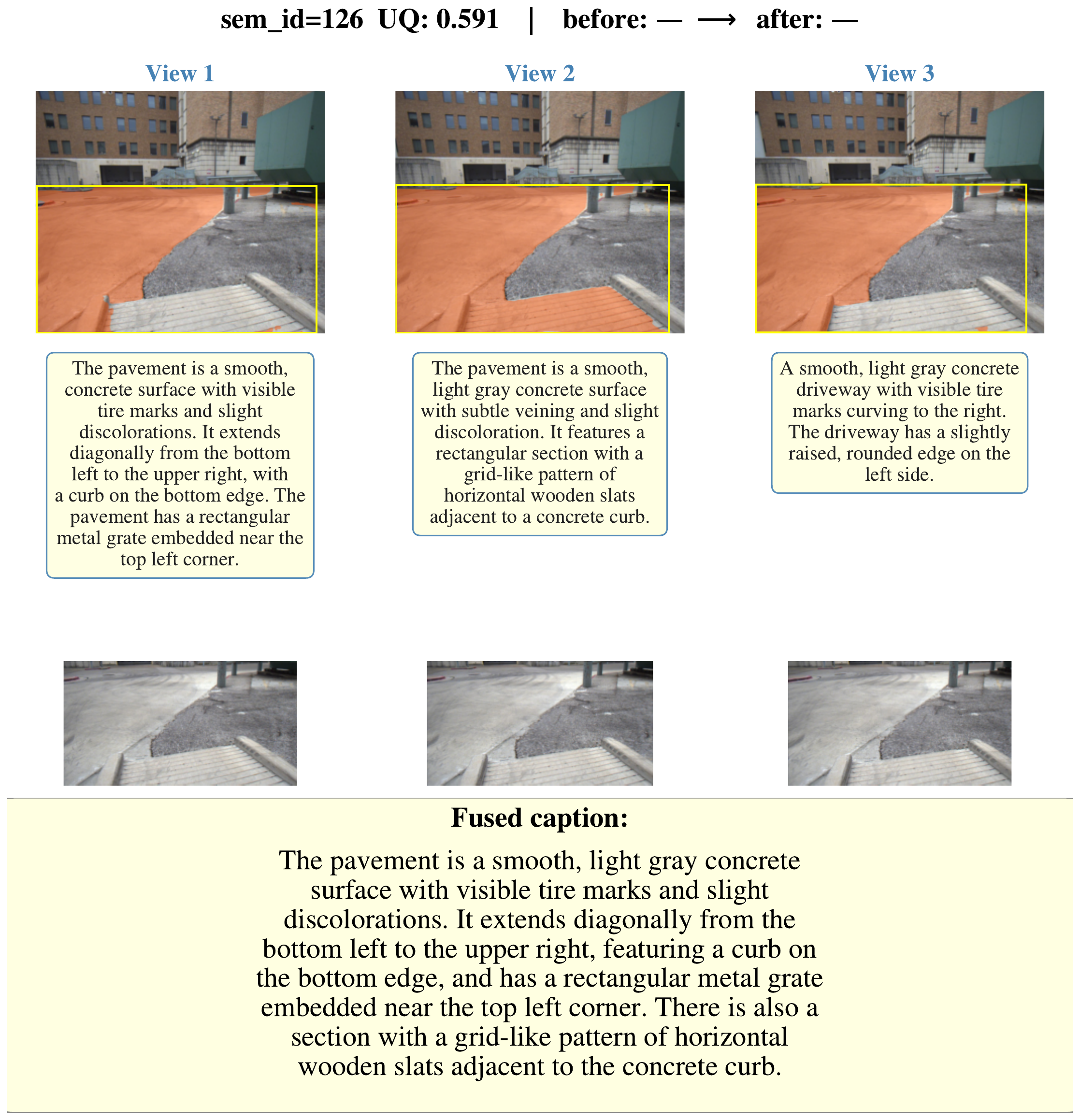}
\end{figure}
\begin{figure}[H]
\centering
\includegraphics[width=0.7\textwidth]{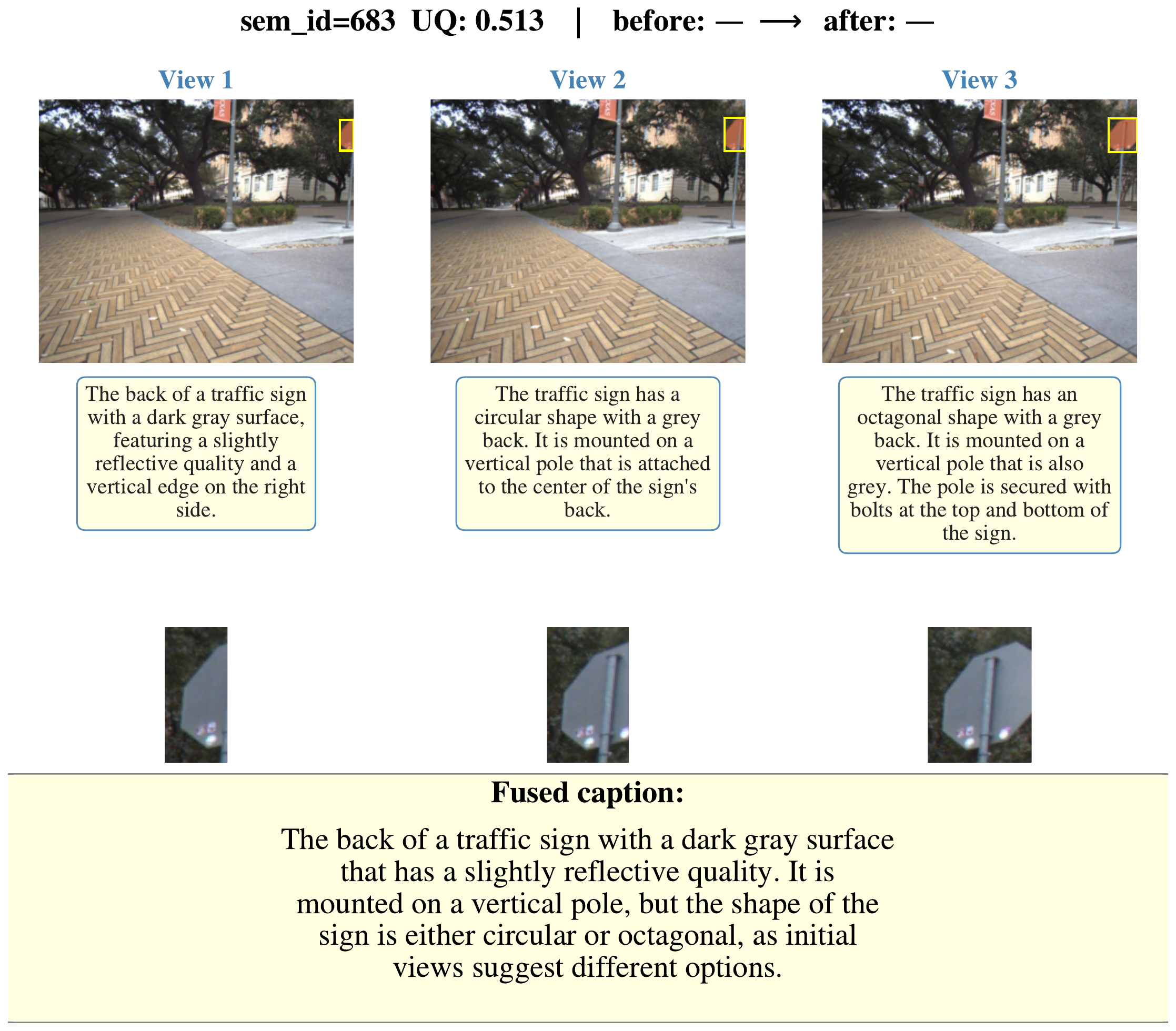}
\end{figure}
No refinement views are needed for these objects, since the initial views already provide a stable and coherent semantic summary. 
The captions are consistent across views, and the uncertainty scores are low, indicating that the system is confident in its understanding of these objects.
\subsection{Examples of Objects with High Uncertainty}
\begin{figure}[H]
\centering
\includegraphics[width=\textwidth]{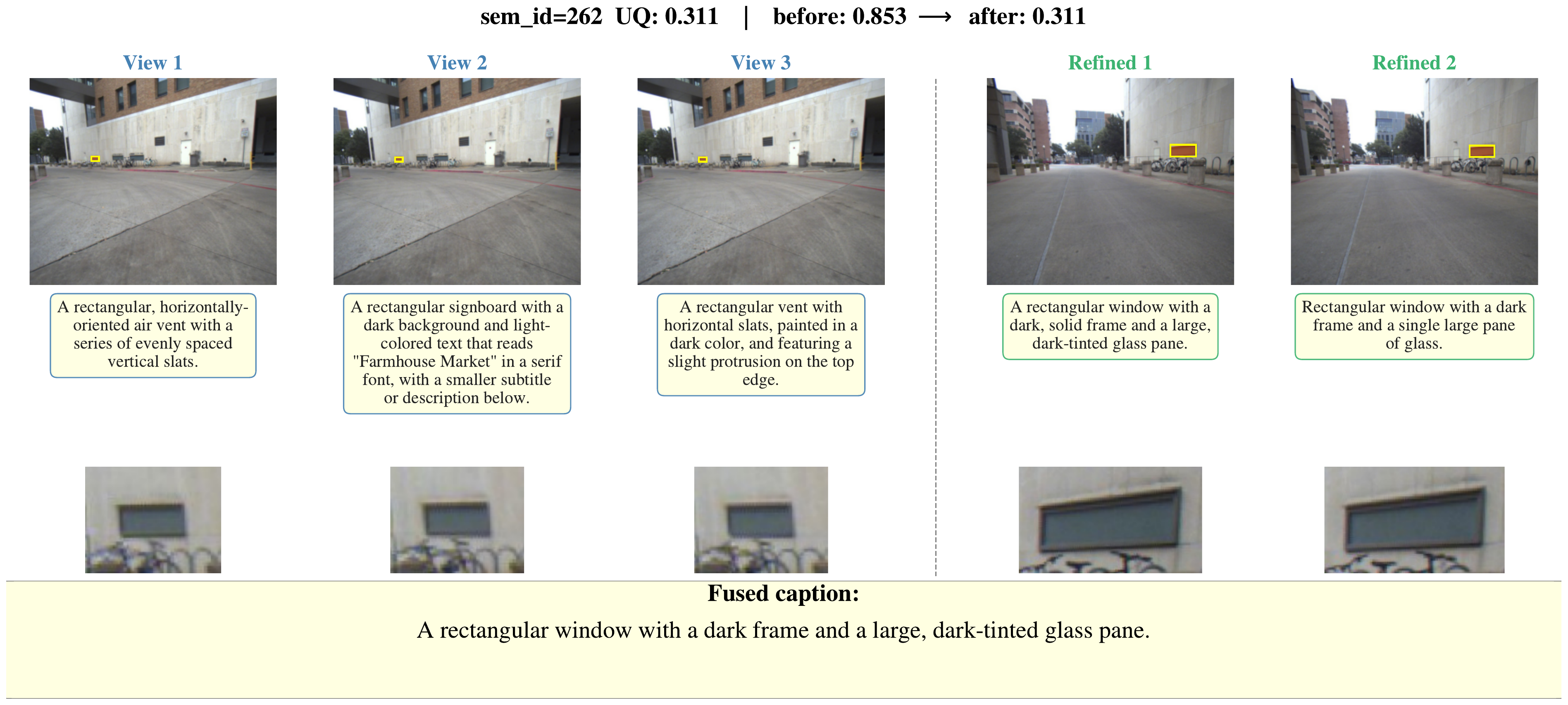}
\end{figure}
\begin{figure}[H]
\centering
\includegraphics[width=\textwidth]{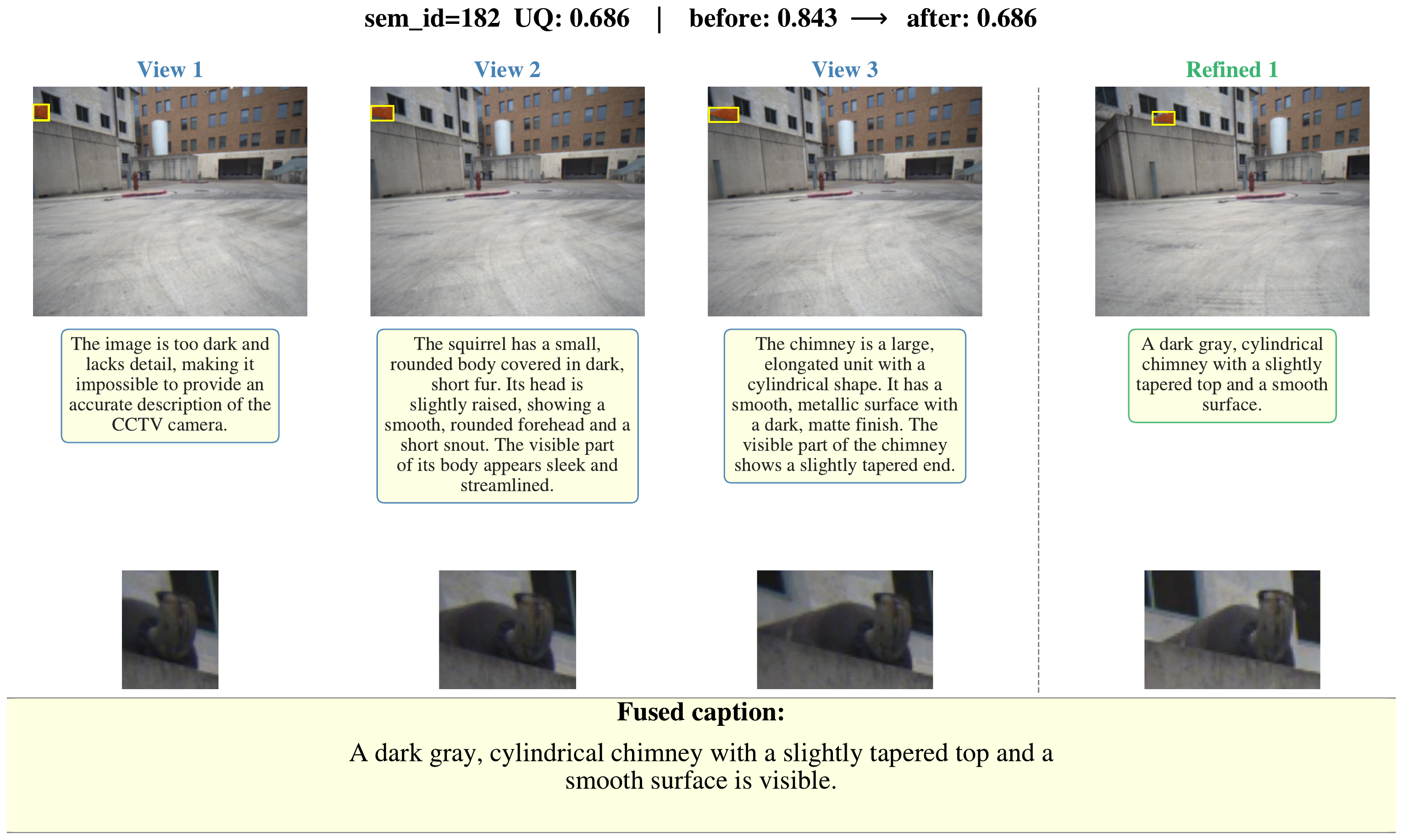}
\end{figure}
\begin{figure}[H]
\centering
\includegraphics[width=\textwidth]{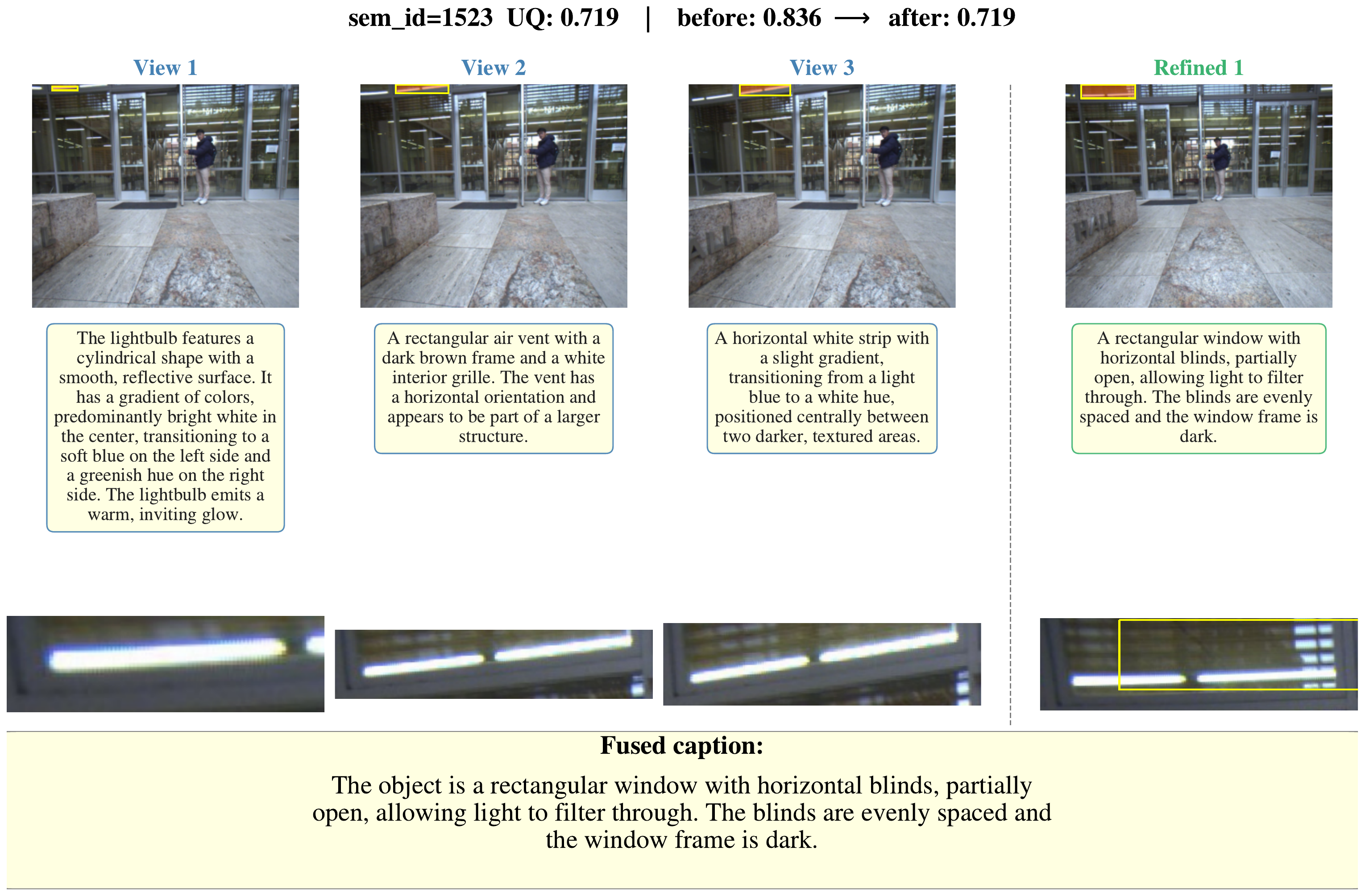}\\[1em]
\includegraphics[width=\textwidth]{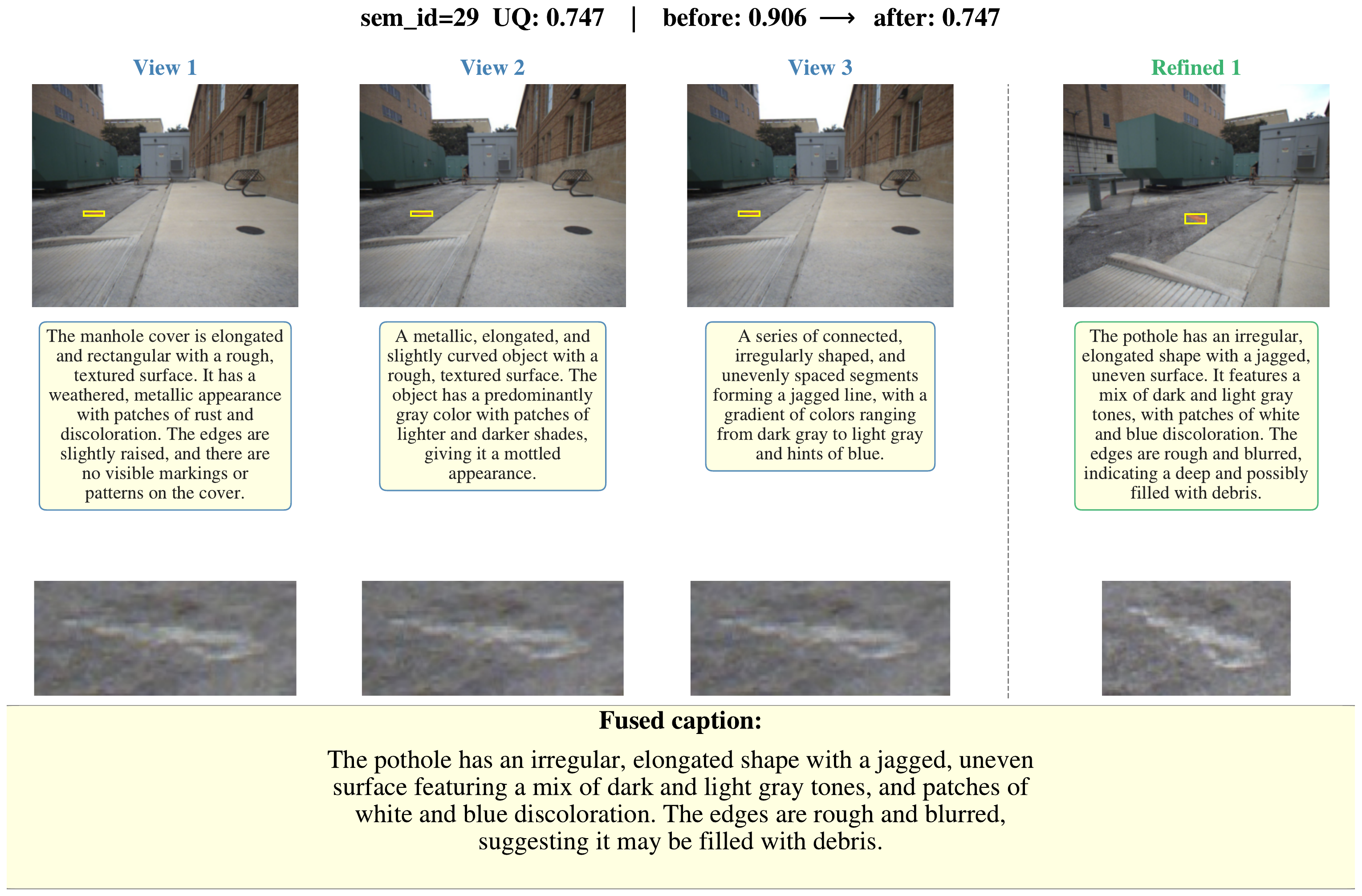}
\end{figure}
\begin{figure}[H]
\centering
\includegraphics[width=\textwidth]{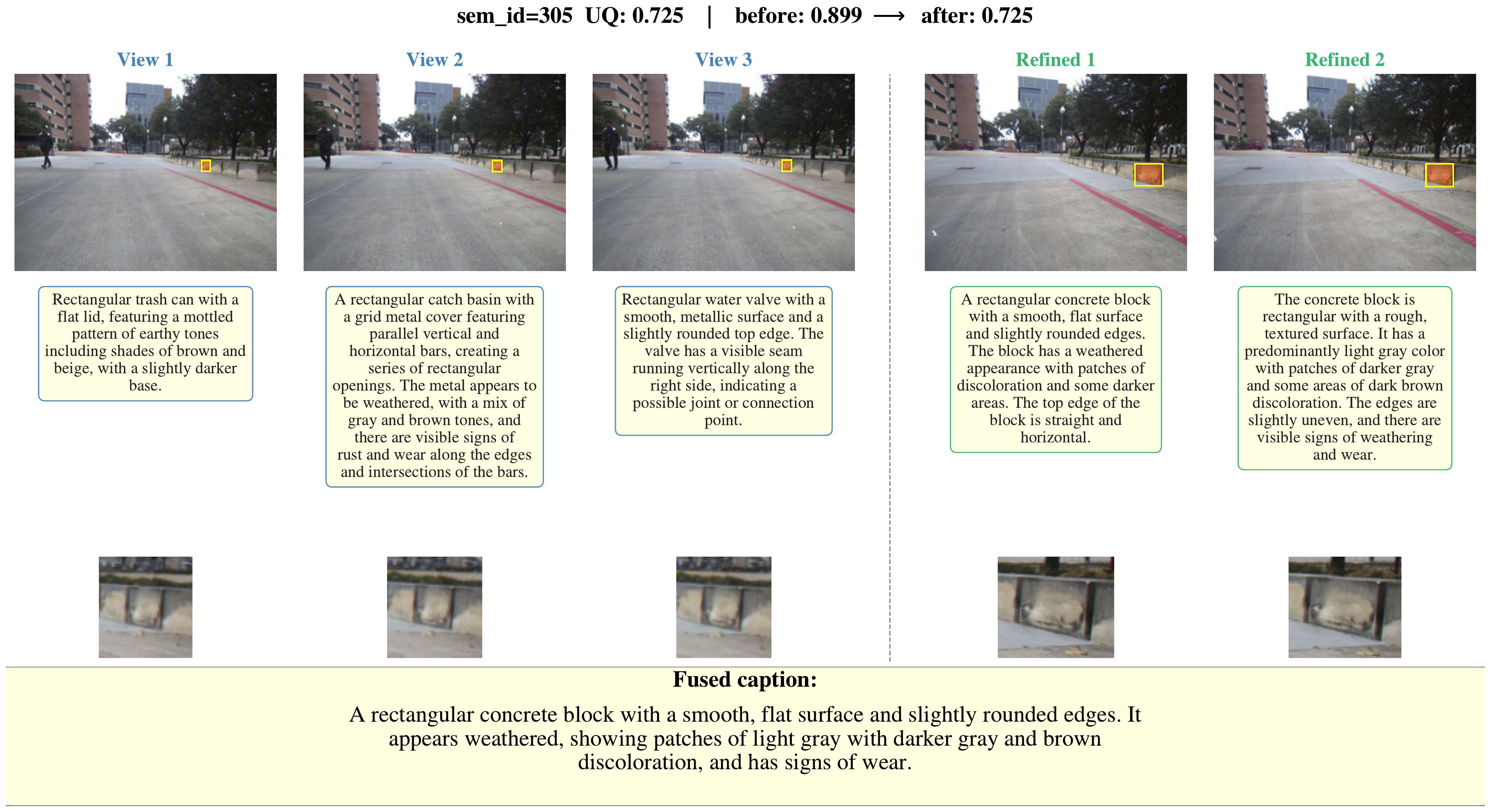}\\[1em]
\includegraphics[width=\textwidth]{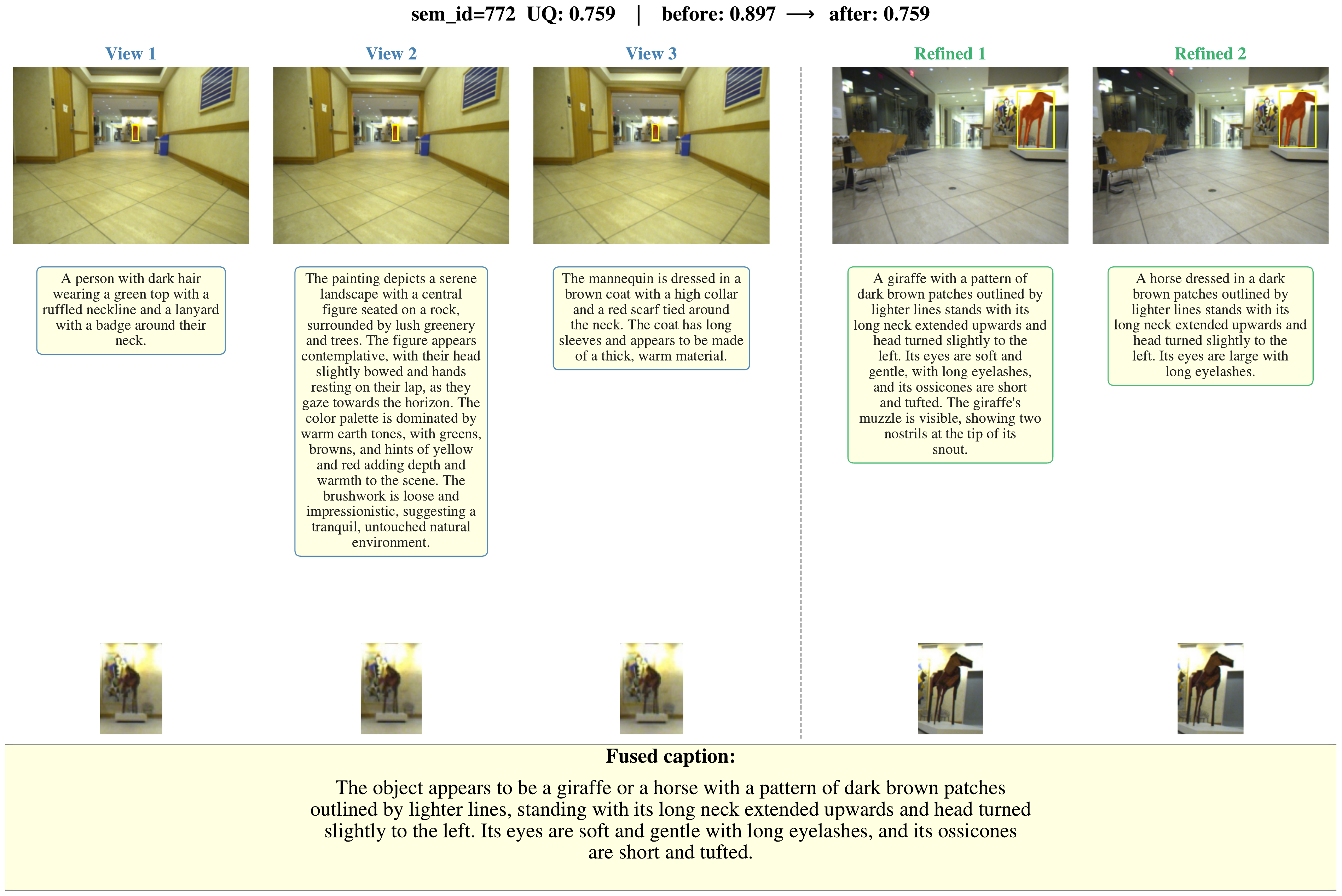}
\end{figure}
\begin{figure}[H]
\centering
\includegraphics[width=\textwidth]{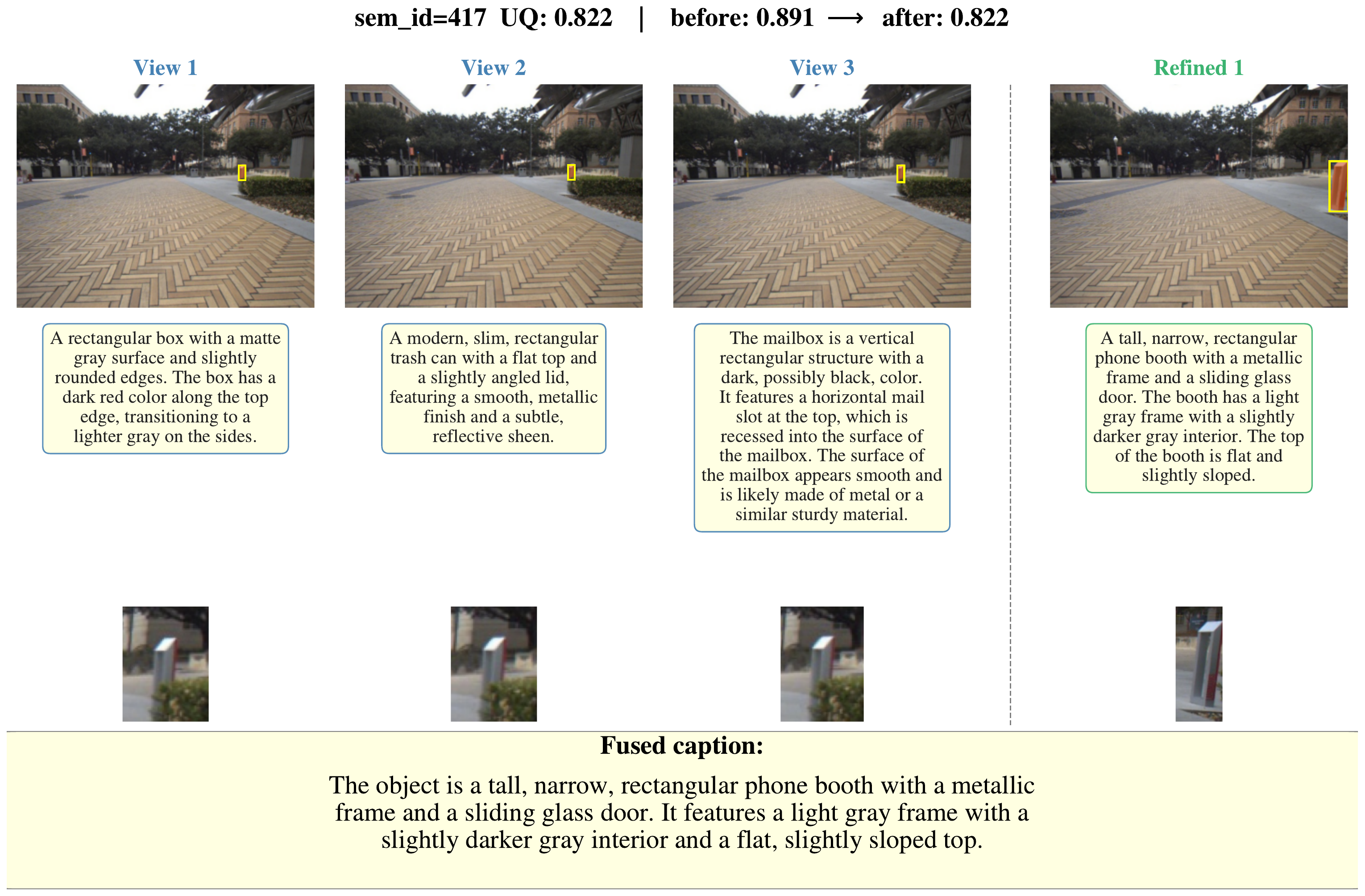}\\[1em]
\includegraphics[width=\textwidth]{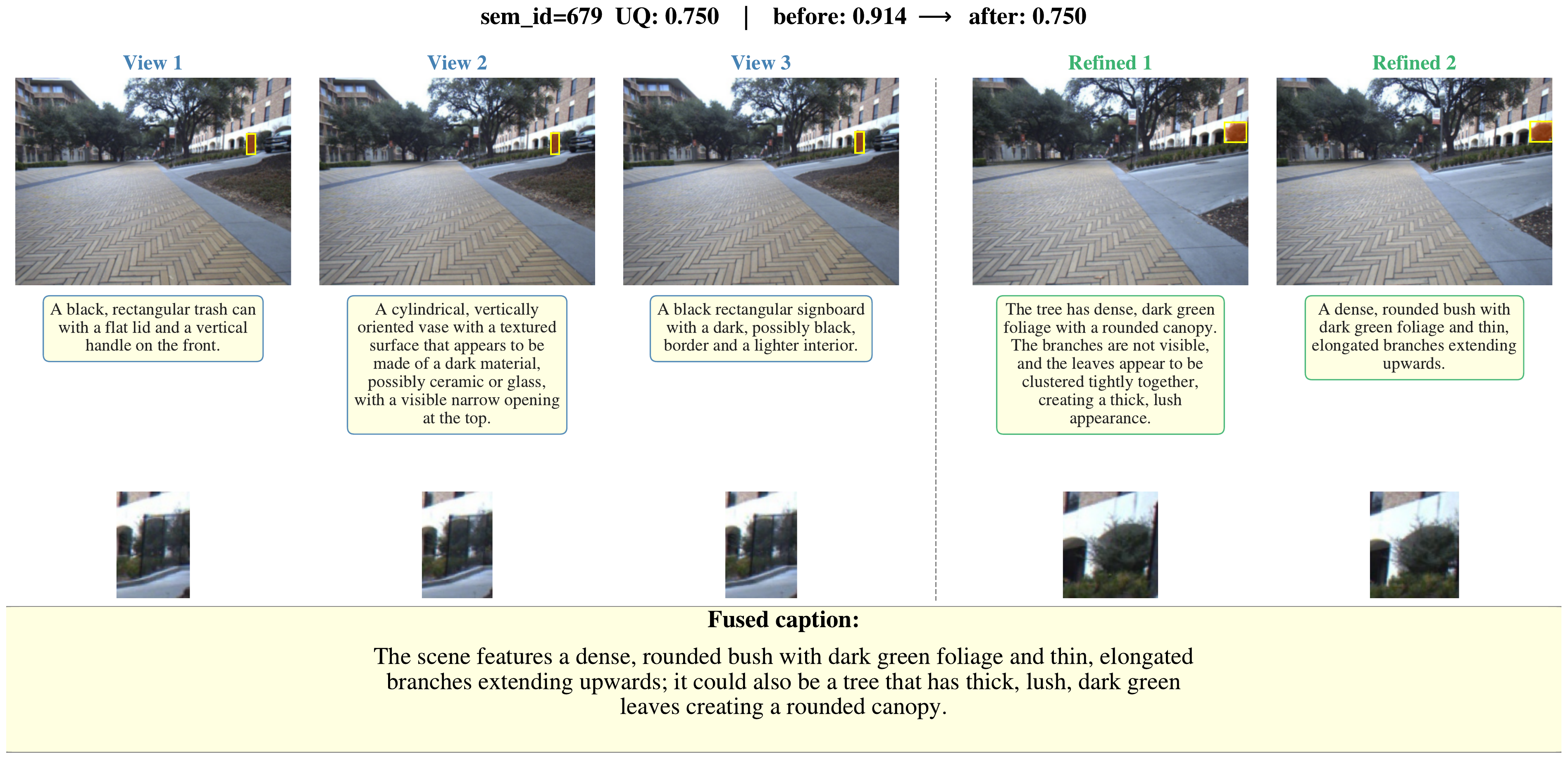}
\end{figure}
Across these qualitative examples, UQ-DAAAM consistently selects refined views that expose object evidence that is
 either missing or misleading in the initial views, leading to substantial reductions in uncertainty.
 In the first example (sem\_id=262), the initial views yield conflicting descriptions such as an air vent, a ``Farmhouse Market'' signboard, and a slatted structure,
 while the refined views, captured from a closer and more frontal viewpoint, reveal the object clearly as a dark-framed window with a tinted glass pane, reducing uncertainty from 0.853 to 0.311.
 In the second and third examples (sem\_id=182 and sem\_id=1523), the original captions are highly unstable and include implausible object hypotheses such as
 a CCTV camera, a squirrel, a glowing lightbulb, and an air vent; the refined views instead provide more frontal and informative observations
 that reveal a dark gray cylindrical chimney and a rectangular window with partially open horizontal blinds, lowering uncertainty from 0.843 to 0.686 and from 0.836 to 0.719, respectively.
 The fourth example (sem\_id=29) shows a harder case: the initial views describe the object as a manhole cover, a curved metallic object, or an irregular jagged surface pattern, while the refined view provides a sharper observation of
 a pothole-like structure, producing a moderate but still meaningful reduction from 0.906 to 0.747.
 The fifth example (sem\_id=305) shows that unstable initial hypotheses of a trash can, a grid metal catch basin, and a water valve are resolved by the refined views
 into a coherent description of a weathered rectangular concrete block, reducing uncertainty from 0.899 to 0.725.
 The sixth example (sem\_id=772) illustrates how distance-induced ambiguity, where the initial small crops are described as a person with a lanyard,
 an impressionistic landscape painting, and a mannequin in a brown coat, is mitigated by refined views that zoom in on the object and reveal an animal sculpture;
 here uncertainty reduction does not force a single interpretation, as the model remains unsure whether it is a horse or a giraffe, and the fused caption conservatively preserves both possibilities (0.897 to 0.759).
 The seventh example (sem\_id=417) highlights an important failure mode: although the refined view clearly improves the visual clarity of the object, which is actually a
 bike parking rack, the underlying VLM still misidentifies it as a tall phone booth with a sliding glass door, so the uncertainty only decreases modestly from 0.891 to 0.822.
 This case shows that refinement can expose a sharper observation while still being bottlenecked by the backbone VLM's recognition ability, and that lower uncertainty does not automatically imply semantic correctness.
 Finally, the eighth example (sem\_id=679) demonstrates that refinement could avoid occlusion: initial captions conflict between a trash can, a ceramic vase, and a signboard due to the occlusion of the metal fences,
 while the refined views narrow the ambiguity between a dense bush and a tree canopy and reduce uncertainty from 0.914 to 0.750, with the fused caption conservatively preserving both possibilities.
 Overall, these examples show that UQ-DAAAM tends to select views with better geometry, scale, or visibility, which makes the resulting captions more semantically consistent and
 leads to measurable uncertainty reduction, while also exposing residual failures that are attributable to the underlying VLM rather than to the uncertainty-driven view selection itself.

\section{Limitations}
\label{app:limitations}
UQ-DAAAM inherits several limitations from the underlying DAAAM pipeline while introducing additional ones of its own. 
First, our uncertainty estimate depends on the quality of the generated object captions and their associated coherence signals. 
If the VLM systematically hallucinates, misses fine-grained attributes, or produces poorly calibrated confidence estimates, 
the resulting uncertainty score may not faithfully reflect the true semantic ambiguity of the object. 
In particular, cross-view agreement does not always imply correctness, since multiple views may consistently support the 
same mistaken description. Second, the refinement stage introduces additional semantic-query cost. Although our method is 
designed to allocate this budget selectively, repeated querying and fusion may still be too expensive for highly dynamic 
platforms or settings with strict latency constraints. Finally, while caption fusion produces a more stable 
object summary, maintaining multi-view descriptions, uncertainty estimates, and refinement histories for many objects may 
increase memory overhead over long horizons. Future work should investigate 
stronger active-view acquisition strategies and memory-bounded summarization mechanisms for long-term deployment.
\clearpage

\end{document}